\newtheorem{proposition}{Proposition}
\newtheorem*{proposition*}{Proposition}
\DeclareMathOperator*{\argmax}{arg\,max}
\definecolor{cvprblue}{rgb}{0.21,0.49,0.74}
\newcommand{\ours}{\textsc{VideoICL}\xspace}
\title{VideoICL: Confidence-based Iterative In-context Learning\\ for Out-of-Distribution Video Understanding}
\author{
    Kangsan Kim$^{1*}$\; 
    Geon Park$^{1*}$\; 
    Youngwan Lee$^{1,3}$\;
    Woongyeong Yeo$^{1}$\;
    Sung Ju Hwang$^{1,2}$\\
    $^1$KAIST \;\; $^2$DeepAuto.ai \;\; $^3$ETRI \\
    \small{\texttt{\{kksan07, geon.park, ywlee88, wgcyeo, sjhwang82\}@kaist.ac.kr}} \;
}
\begin{document}
\maketitle
\def\thefootnote{*}\footnotetext{Equal contribution.}
\begin{abstract}

Recent advancements in video large multimodal models (LMMs) have significantly improved their video understanding and reasoning capabilities. However, their performance drops on out-of-distribution (OOD) tasks that are underrepresented in training data. Traditional methods like fine-tuning on OOD datasets are impractical due to high computational costs. While In-context learning (ICL) with demonstration examples has shown promising generalization performance in language tasks and image-language tasks without fine-tuning, applying ICL to video-language tasks faces challenges due to the limited context length in Video LMMs, as videos require longer token lengths. To address these issues, we propose VideoICL, a novel video in-context learning framework for OOD tasks that introduces a similarity-based relevant example selection strategy and a confidence-based iterative inference approach. This allows to select the most relevant examples and rank them based on similarity, to be used for inference. If the generated response has low confidence, our framework selects new examples and performs inference again, iteratively refining the results until a high-confidence response is obtained. This approach improves OOD video understanding performance by extending effective context length without incurring high costs. The experimental results on multiple benchmarks demonstrate significant performance gains, especially in domain-specific scenarios, laying the groundwork for broader video comprehension applications. Code will be released at \href{https://github.com/KangsanKim07/VideoICL}{https://github.com/KangsanKim07/VideoICL}

\end{abstract}
    
\section{Introduction}
\label{sec:intro}
\begin{figure*}
    \centering
    \includegraphics[width=\linewidth, trim={0cm 0.3cm 0cm 0.3cm}]{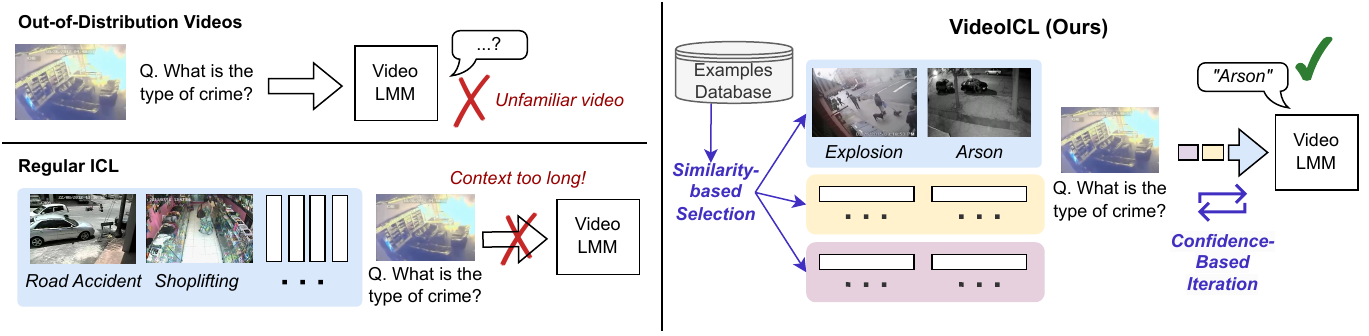}
    \caption{\textbf{Motivation.} \textbf{Top~left:} Video LMMs perform poorly in out-of-distribution videos, such as crime videos. \textbf{Bottom~left:} In-Context Learning (ICL), which is usually employed to solve this problem, is infeasible for video tasks, since the in-context demonstrations are too long. \textbf{Right:} \ours alleviates this problem by selecting the most relevant demonstrations~(\eg, 2-shot) by \textit{similarity-based example selection}, and iteratively performing inference with different sets of demonstrations at each step (\textit{confidence-based iterative inference}).}
    \label{fig:concept}
\end{figure*}
Recent video large multimodal models (LMMs)~\citep{zhang_video_2024, wang_qwen2-vl_2024, cheng2024videollama2advancingspatialtemporal, liu2024oryxmllmondemandspatialtemporal} have shown notable improvement in video understanding and reasoning tasks, \eg, enabling these models to comprehend natural scene videos and answer causal questions. However, as new types of data continue to emerge, these models are expected to handle previously unseen, out-of-distribution (OOD) videos~\citep{bashmal_capera_2023, sultani_real-world_2019, ng_animal_2022, li_sports-qa_2024, martin_driveact_2019, he2024pitvqaimagegroundedtextembedding}. Such OOD videos are rarely encountered during training due to their specialized nature or the need for domain-specific knowledge, such as gymnastic or surgical videos. Consequently, the performance of video LMMs on OOD videos is poor compared to in-distribution videos~\citep{marcu2024lingoqavisualquestionanswering, khattak2024goodvideolmmcomplex}, primarily due to the limited representation of OOD content in training datasets. For instance, video LMMs can readily distinguish between well-represented actions like ``dancing" and ``exercising" but struggle to differentiate actions like ``abuse" from ``assault", as crime videos are seldom included in training datasets. Given that the model faces numerous unseen situations in real-world scenarios, improving OOD video understanding with video LMMs remains a significant challenge.

Fine-tuning the model on the target videos is a straightforward method to enhance its performance. However, fine-tuning the model for each OOD scenario is time-consuming and often impractical, as it requires substantial amounts of training data to avoid overfitting and incurs significant training costs.
In contrast, in the realm of large language models~(LLMs), in-context learning (ICL)~\citep{brown2020icl}—which involves providing example inputs alongside a test sample during inference—has been actively explored as an efficient alternative for handling OOD tasks. ICL has shown strong generalization on unseen tasks in LLMs, making it advantageous as it bypasses the need for fine-tuning. Many studies have demonstrated the effectiveness of ICL in language-only and image-language tasks~\citep{han2024doesgpt4visionadaptdistribution, zhang2024outofdistributiongeneralizationmultimodallarge, yuan2023revisitingoutofdistributionrobustnessnlp, alayrac_flamingo_2022, li_otter_2023}; however, ICL's potential in video-language tasks has not been fully explored. 

A key challenge with ICL in the video domain is that video tokens are significantly longer than image or text tokens, limiting the number of video examples in a single context. For instance, LLaVA-Video~\citep{zhang_video_2024} can process up to 32K tokens~(\ie, context-length), but a 30-second, 384×384 resolution video sampled in 32 frames is converted to approximately 5.5K tokens through the vision encoder of the model with a patch size of 14. This allows for a maximum of only four video samples within the token limit. Considering that recent text-only and image-text ICL research is advancing toward many-shot learning~\citep{agarwal2024manyshotincontextlearning, jiang2024manyshotincontextlearningmultimodal, bertsch_-context_2024}, often utilizing over 1K examples, the number of video examples here is notably constrained in comparison.

This challenge has been approached from two main directions in multimodal ICL research. One approach aims to reduce the token length of each example~\citep{gao_aim_2024, zhuang2024vectoriclincontextlearningcontinuous}, while the other focuses on selecting a minimal set of highly effective examples~\citep{chen2023understandingimprovingincontextlearning, wang2024bayesianexampleselectionimproves}. However, the first approach potentially may lead to a loss of crucial information, while the second relies on a limited number of in-context examples, making the model performance highly sensitive to the relevance of these examples to the given query prompt. 

%

To address these issues, we propose \ours, a training-free video in-context learning framework for OOD video understanding, which efficiently handles multiple examples within a limited context length while preserving example quality. 
Specifically, we first introduce a similarity-based relevant example selection strategy, ranking demonstrating examples based on both video and text feature similarity to construct an ordered set of relevant examples for inference. 
Then, we present a confidence-based iterative inference mechanism;
\ours performs in-context learning iteratively, with each iteration leveraging a new subset of highly relevant examples from the top of the similarity ranking. 
At each iteration, the model calculates a confidence score based on token probabilities and stops iterating once it reaches a sufficient confidence level to generate an accurate answer.
This iterative approach enables the model to effectively utilize a larger pool of examples, maintaining informational richness without exceeding context limits.



Extensive experiments on several OOD video understanding benchmarks demonstrate the superiority of our method.
Specifically, we validate our approach across six datasets, spanning four distinct video-language tasks: multiple-choice question answering, open-ended question answering, video classification, and video captioning.
The results show that our framework, tested with LLaVA-Video-7B~\cite{zhang_video_2024}, outperforms zero-shot baselines, achieving an average improvement of 25.6\%p and up to 54.6\%p in QA and classification tasks, alongside a gain of 0.143 BLEU-4 points in video captioning. Notably, our approach using a 7B model outperforms a 72B model in zero-shot settings and on some datasets, \ours achieves better results than LoRA fine-tuned counterparts. These results underscore the effectiveness of our video ICL approach, showing that even smaller models can outperform larger and fine-tuned models on OOD tasks with a robust ICL framework.

Furthermore, \ours outperforms the state-of-the-art video ICL model, Otter~\citep{li_otter_2023}, which is pre-trained in an in-context manner, by a substantial margin. These results suggest that the training-free nature of \ours facilitates greater scalability and generalization to diverse OOD videos compared to Otter.
We believe that our findings will inspire further research into video in-context learning and drive advancements in enhancing the generalization capabilities of video LMMs for OOD videos.

\noindent Our main contributions are summarized below:
\begin{itemize}

    \item We introduce \ours, a novel training-free framework for video in-context learning that enhances out-of-distribution (OOD) video understanding without requiring model fine-tuning or compromising input quality.

    \item We propose a confidence-based iterative in-context learning approach that effectively leverages multiple examples, addressing the token length limitations of video LMMs.

    \item \ours achieves state-of-the-art results on six diverse OOD video-language datasets, with an average improvement of 25.6\%p and up to 54.6\%p in QA and classification tasks, along with a gain of 0.143 BLEU-4 points in video captioning, significantly outperforming zero-shot and baseline methods.


\end{itemize}


\section{Related works}
\label{sec:related_works}
\begin{figure*}[t]
    \centering
    \includegraphics[width=\linewidth, trim={0cm 1.0cm 0cm 0}]{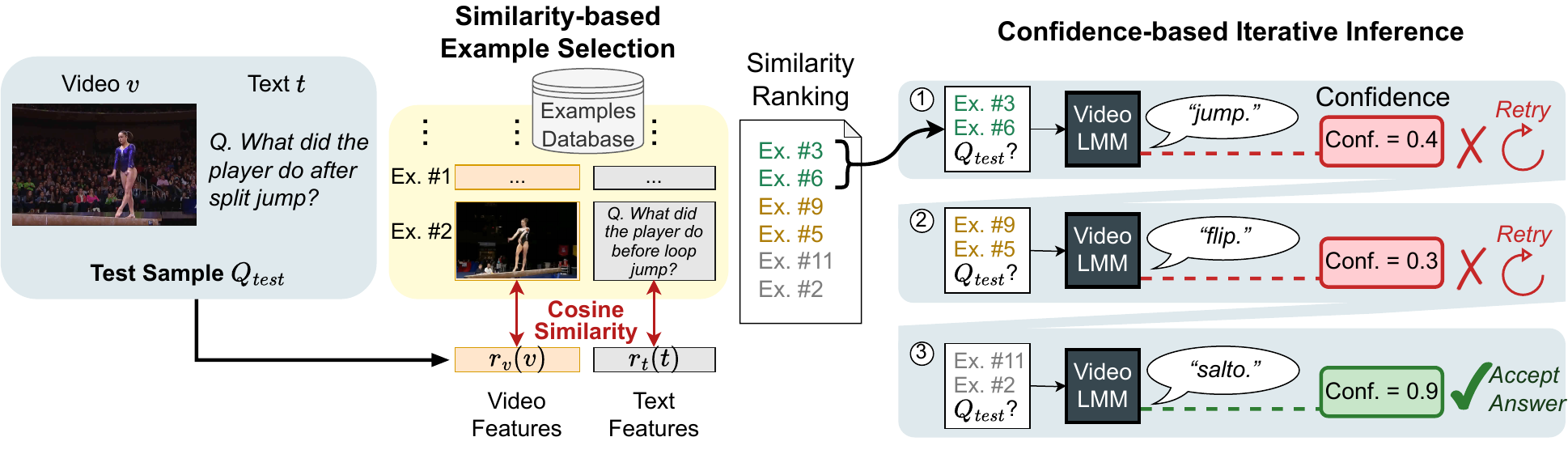}
    \caption{\textbf{Our Methodology.} Given a test query $Q_{test}$ consisting of a video and some text, each are embedded into a vector. \textbf{Similarity-based Example Selection:} Based on the cosine similarity between the query vector and the embeddings in the database of pre-encoded examples, we retrieve top-$k$ most similar examples. This stage takes negligible time cost since it only generates features from test samples and calculates the similarities with pre-encoded features. \textbf{Confidence-Based~Iterative Inference:} Starting from the top of the list, each set of $m$ examples are used as in-context examples for the query $Q_{test}$, until the confidence for the generated answer exceeds the threshold.}
    \label{fig:method}
\end{figure*}
\subsection{Video large multimodal models}

The impressive capabilities of LMMs have motivated substantial research into video LMMs~\citep{wang_qwen2-vl_2024, zhang_video_2024, cheng2024videollama2advancingspatialtemporal, liu2024oryxmllmondemandspatialtemporal}, demonstrating significant performance on video understanding benchmarks. However, the benchmarks commonly used for evaluating video LMMs, such as NExT-QA~\citep{xiao_next-qanext_2021}, Perception Test~\citep{patraucean_perception_2023}, and MVBench~\citep{li_mvbench_2023}, primarily evaluate models on general tasks that require only basic skills for typical videos. Therefore, the strong performance of video LMMs is largely limited to in-domain videos, which resemble the videos they were trained on.

On the other hand, studies have shown that video LMMs perform poorly when tested on OOD videos. \citet{khattak2024goodvideolmmcomplex} test different video LMMs using unusual and physically anomalous activities. 
Additionally, \citet{marcu2024lingoqavisualquestionanswering} reveal that video LMMs struggle with autonomous driving videos.
Given that the model encounters numerous unseen situations in real-world, these works underscore the need for further research to enhance OOD video understanding within video LMMs.


\subsection{Multimodal in-context learning}
Beyond the success of ICL on text-only tasks, it has been extended to multimodal tasks involving images~\citep{gao_aim_2024,sterner_few-shot_2024,xu_introspection_2024,li_otter_2023,zhao_mmicl_2024,chen_mmict_2024,chen2023understandingimprovingincontextlearning,baldassini_what_2024} and videos~\citep{yu_eliciting_2024,alayrac_flamingo_2022,li_otter_2023}. 
In video ICL, most studies aim to equip LMMs with the ability to comprehend multiple video examples by training them on video-text interleaved datasets. 
Otter~\citep{li_otter_2023} trains image-text ICL model using ICL instruction tuning dataset with video-language demonstrations. \citet{yu_eliciting_2024} propose a strategy to generate a video-text ICL training dataset that induces small-scale language models.
However, these studies do not address OOD videos, and fine-tuning is impractical for handling OOD tasks, as it is both costly and time-consuming to train the model for each task.

A significant challenge in multimodal ICL is the increased length of each demonstration. One approach to address this is to compress the tokens of context examples to fit within the model's limit, using fused virtual tokens~\citep{gao_aim_2024} or learnable parameters~\citep{chen_mmict_2024}.
However, such compression methods risk losing crucial details. Another approach is selecting the most effective demonstrations~\citep{li2023configuregoodincontextsequence, xu_introspection_2024, wang2024bayesianexampleselectionimproves}. 
MMICES~\citep{chen2023understandingimprovingincontextlearning} considers the varying importance of text and image information. 
However, since these methods still use a limited number of examples during inference, their performance can be less robust and highly sensitive to selection.


\subsection{Iterative in-context learning}
To enhance the robustness of ICL, some studies have employed iterative methods in language-only tasks. IDS~\citep{qin_ids_2024} utilizes iterative refinement by selecting multiple sets of demonstrations and performing majority voting among the generated answers to identify the best response. \(Se^2\)~\citep{liu_se2_2024} introduces an example selection approach that considers the sequence of examples, iteratively refining the selection to find the optimal set based on previously chosen examples. Iterative retrieval~\citep{chen2024learningretrieveiterativelyincontext} enhances ICL by using a stateful, policy-learning framework for iterative example selection, improving performance in semantic parsing tasks. Inspired by these works, we adopt an iterative method to address the challenge of limited example numbers in video ICL.



\subsection{Factual confidence estimation}
Estimating the factual confidence of LLMs has become an important research focus, as higher confidence levels are often associated with more accurate outputs and fewer hallucinations~\citep{guo2017calibrationmodernneuralnetworks, huang2023surveyhallucinationlargelanguage, feng2024donthallucinateabstainidentifying}. \citet{xiong2024can} and \citet{lin_teaching_2022} explore \textit{verbalization}, where models express confidence in natural language. \citet{kumar2024confidencehoodinvestigationconfidenceprobability} employ logit-based methods to estimate token-level confidence by assessing token probabilities as indicators, while \citet{liu_se2_2024} apply these techniques for selecting ICL examples. Sequence-level confidence is often derived by aggregating token-level confidence, typically using the minimum probability across all tokens~\citep{huang2023lookleapexploratorystudy}. \citet{azaria_internal_2023} and \citet{li2023inferencetime} utilize a \textit{trained probe} approach, using an MLP trained on supervised data to map hidden states to confidence values.

\section{Method}
\label{sec:method}
In this section, we describe our method to overcome the challenges of video ICL. First, we select a certain number ($k$) of in-context examples for a given query, constructing a list of demonstrating examples based on similarity (\cref{sec:sim_based_ex_selection}). Next, we sequentially fetch a small number ($m \leq k$) of examples from the list, iteratively refining the answer based on model confidence (\cref{sec:conf_based_rep}). The complete workflow of \ours is shown in \cref{fig:method}.

\subsection{Notation}
Throughout the paper, $\hat{y} = \mathsf{M}(x; \mathcal{D})$ denotes the answer a video LMM $\mathsf{M}$ generates for a query $x$ using a set of in-context examples $\mathcal{D}$. $\mathrm{Conf}_{\mathsf{M}}(x, \hat{y}; \mathcal{D})$ denotes the video LMM $\mathsf{M}$'s estimated confidence in a generated answer $\hat{y}$, given a question $x$ and a set of in-context examples $\mathcal{D}$.


\subsection{Similarity-based example selection}
\label{sec:sim_based_ex_selection}
At test time, we are given a query $x = (t, v)$, where $t$ and $v$ are the text and video in the query prompt, respectively. Based on the relevance to the query, $k$ in-context examples are selected from the set of target task-specific example data. $k$ is a fixed hyperparameter. For this, we use a linear combination of the cosine similarities on the vector representation of the given query and example data:
\begin{equation}
\begin{split}
    & S_Q\left(\left(t, v\right), \left(\tilde{t}, \tilde{v}\right)\right)  \\
    & := \alpha S_C(r_t(t), r_t(\tilde{t})) + (1 - \alpha) S_C(r_v(v), r_v(\tilde{v})),
\end{split}
\end{equation}
where $S_C$ denotes the cosine similarity between two vectors. $r_t(\cdot)$ and $r_v(\cdot)$ denote text and video encoders, respectively, which map arbitrary-length text and video inputs into fixed-length vectors. $\alpha$ is a balancing coefficient between the text and the video similarities.
From the set of example data $\mathsf{D} = \{(\tilde{t}_1, \tilde{v}_1), \dots, (\tilde{t}_n, \tilde{v}_n)\}$, we select the top-$k$ examples that maximize $S_Q$:
\begin{align}
     \mathrm{SelectRelevant}_k (\mathsf{D}, x) := \underset{\tilde{x} \in \mathsf{D}}{\mathrm{Top-}k} \left[ S_Q (x, \tilde{x}) \right].
\end{align}
Note that this example selection has negligible cost overhead at inference time, since all of the text and video vector representations in the example set can be pre-processed and be stored in a vector database.

\subsection{Confidence-based iterative inference}
\label{sec:conf_based_rep}
Since all $k$ in-context examples do not fit within the context size of open-source video LMMs, we instead provide the model $m$ examples at each iteration. $m \leq k$ is a fixed hyperparameter, chosen so that $m$ examples, along with the query, fit within the model's context length limit. Specifically, from the list of $k$ relevant examples ${\mathcal{D}}$, we first generate an answer for the given query $x$ using the first $m$ examples: $\hat{y}_1 = \mathsf{M}(x; \mathcal{D}_{1:m})$. If $\mathrm{Conf}_{\mathsf{M}}(x, \hat{y_1}; \mathcal{D}_{1:m}) > c_{th}$, \ie, the model's confidence for the generated answer exceeds a predefined confidence threshold $c_{th}$, then we output $\hat{y}_1$ as the final answer. Otherwise, we retry using the next $m$ examples $\mathcal{D}_{m+1:2m}$. We repeat this process until either the confidence for the last answer exceeds $c_{th}$ or all of $\mathcal{D}$ is exhausted. After multiple iterations are completed, we finally output the answer which had the highest confidence. 

For the confidence estimation, inspired by LLM literature~\cite{huang2023lookleapexploratorystudy, yang2023improving, mahaut_factual_2024} that often utilizes token probability, we also quantify the confidence score by taking the minimum token probabilities from the generated response.
Through empirical observations, we found that using the minimum token probability is a better measure of confidence. 
Given a generated sequence of logits, we first apply softmax normalization to convert logits to probabilities, obtaining a sequence of $\hat{y} = \{p_1, ..., p_T\}$. We then compute the confidence score as $c = \min_{i=1}^{T} p_i$,
where $p_i$ represents the probability of the $i$-th token in the generated sequence.
More confidence measures are explored in~\cref{subsec:conf}.

\begin{algorithm}[t]
\caption{\ours Inference Process}\label{alg:iteration}
\begin{algorithmic}[1]
\INPUT Query $x$, video LMM $\mathsf{M}$, example pool $\mathsf{D}$.
\OUTPUT Final answer $\hat{y}$.
\STATE $\mathcal{D} = \mathrm{SelectRelevant}_k (\mathsf{D}, x)$.
\STATE $i \leftarrow 1.$
\REPEAT
    \STATE $\mathcal{D}_{cur} \leftarrow \mathcal{D}_{(i-1)m+1:im}.$ \COMMENT{Select next batch of samples}
    \STATE $\hat{y}_i = \mathsf{M}(x; \mathcal{D}_{cur})$. \COMMENT{Generate answer}
    \STATE $c_i = \mathrm{Conf}_{\mathsf{M}}(x, \hat{y}_1; \mathcal{D}_{cur}).$ \COMMENT{Evaluate confidence}
    \STATE $i \leftarrow i + 1.$
\UNTIL{$c_i > c_{th}$~~or~~$im > k$.}
\STATE $\hat{y} = \hat{y}_{\argmax_i c_i}.$ \COMMENT{Select the largest confidence answer}
\end{algorithmic}
\end{algorithm}

\begin{figure*}[t]
    \centering
    \includegraphics[trim={0 .3cm 0 .4cm}, width=\linewidth]{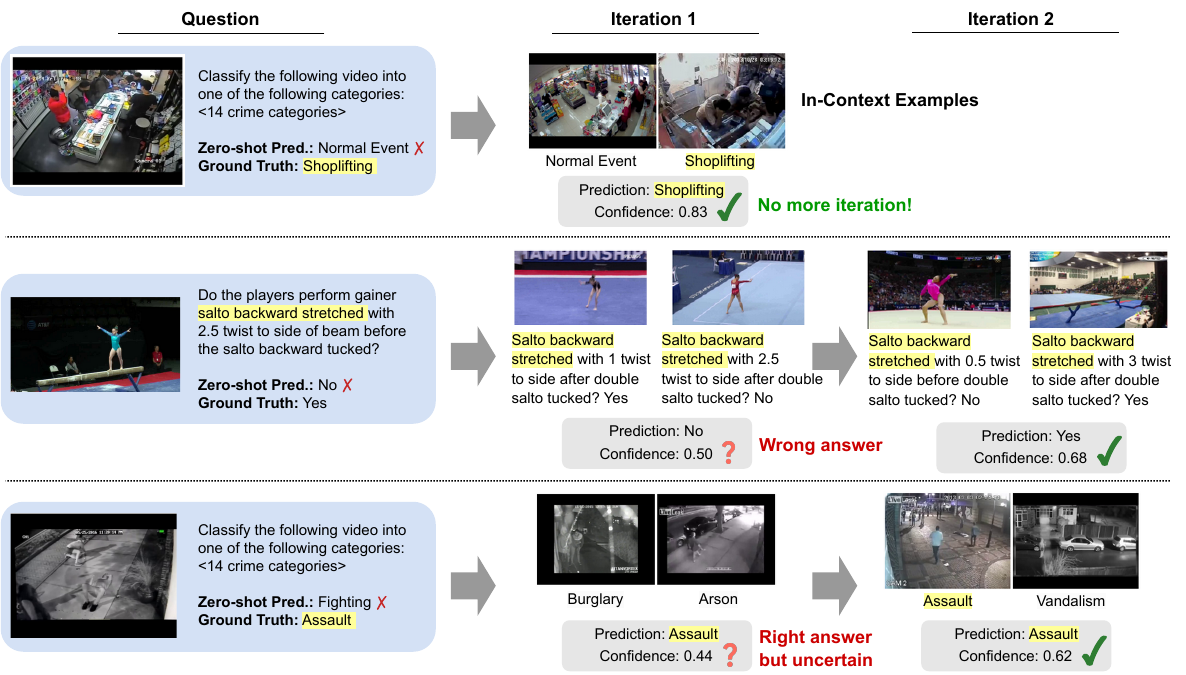}
    \caption{\textbf{Qualitative Results.} 
    We show three real hand-picked test samples from the main benchmarks. The first and third examples are from the UCF-Crime~\cite{sultani_real-world_2019} (video classification) task, and the second one is from the Sports-QA~\cite{li_sports-qa_2024} (open-ended QA) task. The leftmost column shows the given question, which the vanilla (non-ICL) model makes an incorrect prediction. The second and third columns show the first and second confidence-based iterations, with the selected in-context demonstrations at each iteration.
    }
    \label{fig:qualitative}
\end{figure*}

\subsection{Theoretical analysis}
To substantiate our approach, we further present a theoretical analysis illustrating how our confidence-based iteration method can enhance model accuracy.
Let us assume that the probability of a video LMM outputting the correct answer for a given task, given a query and a set of in-context examples, is constant: $\mathrm{Pr}(\mathsf{M}(x;\mathcal{D}_{\text{cur}}) = y) = p_c$, and independent across iterations.
For simplicity, we further assume that confidence estimation operates independently at each iteration and that the framework outputs the final iteration's answer as the final result, rather than selecting the answer with the highest confidence score.
\begin{proposition}[Asymptotic model accuracy]
    Let $a(n)$ be the expected accuracy of \ours with a maximum of $n$ confidence-based iterations. Then,
    \begin{align}
        \lim_{n \rightarrow \infty} a(n) = \frac{1}{1+\frac{\mathrm{FPR}}{\mathrm{TPR}} \cdot \frac{1-p_c}{p_c}},
    \end{align}
    where $\mathrm{TPR}$ and $\mathrm{FPR}$ stand for the true positive rate (i.e., recall) and the false positive rate of the confidence estimation method, respectively.
\end{proposition}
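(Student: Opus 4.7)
The plan is to model the algorithm as a sequence of i.i.d. Bernoulli trials and then evaluate the limit of a simple geometric series. At each iteration $i$, let $X_i \in \{0,1\}$ indicate whether the generated answer is correct and $Y_i \in \{0,1\}$ indicate whether the estimated confidence exceeds the threshold $c_{th}$. By the independence assumption in the proposition, the pairs $(X_i, Y_i)$ are i.i.d., with $\Pr(X_i = 1) = p_c$, $\Pr(Y_i = 1 \mid X_i = 1) = \mathrm{TPR}$, and $\Pr(Y_i = 1 \mid X_i = 0) = \mathrm{FPR}$. I would write $q := \Pr(Y_i = 0) = p_c(1 - \mathrm{TPR}) + (1 - p_c)(1 - \mathrm{FPR})$ for the per-iteration probability of not stopping, so that the stopping time is geometric with parameter $1-q$.

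Next I would decompose the event ``the final output is correct'' into two disjoint parts: (i) the algorithm stops at some iteration $i \leq n$ with a correct answer, and (ii) no iteration ever triggers the stopping rule, so the framework defaults to the $n$-th answer and that answer happens to be correct. For part (i), the joint probability of stopping at iteration $i$ with a correct answer is $q^{i-1} p_c \mathrm{TPR}$, since iterations $1, \dots, i-1$ must all fall below threshold and iteration $i$ must be jointly correct and above threshold. For part (ii), the joint probability that all $n$ confidence draws land below threshold while $X_n = 1$ equals $q^{n-1} p_c (1 - \mathrm{TPR})$. Summing and applying the finite geometric series formula yields
\begin{equation}
a(n) = p_c \mathrm{TPR} \cdot \frac{1 - q^n}{1 - q} + q^{n-1} p_c (1 - \mathrm{TPR}).
\end{equation}
Assuming $q < 1$, both $q^n$ and $q^{n-1}$ vanish as $n \to \infty$, leaving $a(\infty) = p_c \mathrm{TPR} / (1 - q)$. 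Substituting $1 - q = p_c \mathrm{TPR} + (1 - p_c) \mathrm{FPR}$ and dividing numerator and denominator by $p_c \mathrm{TPR}$ produces the claimed form $1 / \bigl(1 + (\mathrm{FPR}/\mathrm{TPR}) \cdot ((1 - p_c)/p_c)\bigr)$.

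The step that requires the most care is the accounting for part (ii): one must use the unconditional joint probability $\Pr(X_n = 1, Y_n = 0) = p_c(1 - \mathrm{TPR})$ at the final iteration rather than the marginal $p_c$, because conditioning on ``none of the iterations crossed the threshold'' biases the correctness indicator away from $p_c$ whenever $\mathrm{TPR} \neq \mathrm{FPR}$. Fortunately this term carries a factor $q^{n-1}$ and vanishes in the limit, so the asymptotic formula is insensitive to the tie-breaking convention at exhaustion --- which also explains why the simplifying assumption of always returning the last answer (rather than the argmax-confidence answer) does not change the limit. Degenerate cases such as $q = 1$ or $\mathrm{TPR} = 0$ would need separate treatment but fall outside the regime of practical interest and can be excluded as a boundary condition.
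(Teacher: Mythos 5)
Your proof is correct and follows essentially the same route as the paper: both treat the iterations as i.i.d.\ trials with per-iteration continuation probability $q$ (the paper's $p_u$), sum the geometric series $\sum_i q^{i-1}p_c\,\mathrm{TPR}$ for correct-and-confident termination, and let the exhaustion term vanish as $n\to\infty$. The only difference is in the terminal term --- you use $q^{n-1}p_c(1-\mathrm{TPR})$ with the sum running to $i=n$, whereas the paper uses $p_u^{n-1}p_c$ while also summing to $i=n$, which double-counts the correct-and-confident event at the last iteration for finite $n$ (e.g.\ its $a(1)=p_c(1+\mathrm{TPR})$ versus your correct $a(1)=p_c$); your accounting is the more careful one, and as you note the discrepancy is $O(q^{n-1})$ and leaves the limit unchanged.
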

For example, if $p_c = 0.5$, $\mathrm{TPR} = 0.9$, and $\mathrm{FPR} = 0.1$, then the model's accuracy is expected to converge to 0.9 as the maximum number of iterations becomes large. In conclusion, if the confidence estimator is accurate enough, then our confidence-based iteration method is theoretically guaranteed to perform better than performing only one iteration. Please refer to \cref{sec:appendix_C} for the proof and rigorous statement of the proposition.

\begin{table*}[t]
    \newlength{\shiftdown}
    \setlength{\shiftdown}{-0.25em}
    \newcommand{\grey}[1]{\cellcolor[HTML]{eeeeee}{#1}}
    \newcommand{\vbar}[1]{%
    ~~
    }
    \newcommand{\ft}[1]{\textcolor{gray}{#1}}
    \centering
    \resizebox{\linewidth}{!}{
    \begin{tabular}{l|l|@{\hskip 6pt}lccccccccccccc@{\hskip 6pt}}
        \toprule
         \multicolumn{3}{c}{} & \multirow{2}{*}[-3em]{\makecell{$n$}} & \multirow{2}{*}[-3em]{\makecell{$k$}} & \makecell{Multiple\\Choice QA} & \multicolumn{2}{c}{Open-ended QA} & \multicolumn{2}{c}{\makecell{Video\\Classification}} & \multicolumn{6}{c}{Video Captioning} \\
         \cmidrule(l{2pt}r{2pt}){6-6} \cmidrule(l{2pt}r{2pt}){7-8} \cmidrule(l{2pt}r{2pt}){9-10} \cmidrule(l{2pt}r{2pt}){11-16}
         \multicolumn{3}{c}{} &&& \multirow{2}{*}[\shiftdown]{\makecell{Animal\\Kingdom}} & \multirow{2}{*}[\shiftdown]{\makecell{Sports-\\QA}} & \multirow{2}{*}[\shiftdown]{\makecell{Pit-\\VQA}} & \multirow{2}{*}[\shiftdown]{\makecell{UCF-\\Crime}} & \multirow{2}{*}[\shiftdown]{\makecell{Drive\\\&Act}} & \multicolumn{6}{c}{CapERA} \\
         \cmidrule(l{2pt}r{2pt}){11-16}
         \multicolumn{3}{c}{} &&&&&&&& {\scriptsize BLEU-1} & {\scriptsize BLEU-2}& {\scriptsize BLEU-3}& {\scriptsize BLEU-4}& {\scriptsize METEOR}& {\scriptsize ROUGE-L}\\
         \midrule
         \multicolumn{3}{l}{GPT-4o~\citep{openai_gpt-4o_2024}} & - & 0 & 58.2 & - & 6.9 & 58.0 & - & 0.143 & 0.065 & 0.037 & 0.023 & 0.142 & 0.173 \\
         \multicolumn{3}{l}{Gemini-1.5 Pro~\citep{geminiteam2024gemini15unlockingmultimodal}} & - & 0 & 72.9 & - & 14.7 & 55.1 & - & 0.126 & 0.057 & 0.031 & 0.019 & 0.134 & 0.176 \\
         \multicolumn{3}{l}{Otter-7B~\citep{li_otter_2023}} & 1 & 8 & 19.4 & - & 21.8 & 6.8 & - & 0.241 & 0.135 & 0.088 & 0.059 & 0.169 & 0.167 \\
         \midrule
         \multirow{9}{*}{\makebox[0.3em][c]{\rotatebox{90}{LLaVA-Video~\citep{zhang_video_2024}}}} & \makecell[l]{72B} & 
         {Zero-shot} & - & 0 & 69.7 & 25.7 & 5.7 & 35.6 & 14.6 & 0.133 & 0.060 & 0.034 & 0.020 & 0.129 & 0.170 \\
         \cmidrule(l{2pt}r{2pt}){2-16}
         &
         \makecell[l]{7B} & \makecell[l]{\ft{LoRA FT}} & \ft{-} & \ft{0} & \ft{70.2} & \ft{-} & \ft{40.5} & \ft{51.9} & \ft{-} & \ft{0.528} & \ft{0.393} & \ft{0.302} & \ft{0.227} & \ft{0.271} & \ft{0.181} \\
         && {Zero-shot} & - & 0 & 68.0 & 25.5 & 6.7 & 39.3 & 20.2 & 0.162 & 0.077 & 0.045 & 0.027 & 0.149 & \textbf{0.181} \\
         && \makecell[l]{MMICES~\citep{chen2023understandingimprovingincontextlearning}} & 1 & 2 & 69.3 & 43.0 & 46.4 & 50.7 & 51.3 & 0.462 & 0.312 & 0.224 & 0.160 & 0.245 & 0.178 \\
         && \textsc{SimRankOnce} & 1 & 2 & 69.3 & 41.8 & 54.0 & 50.7 & 52.0 & 0.462 & 0.312 & 0.224 & 0.160 & 0.245 & 0.178 \\
         && \textsc{RandExVote} & 4 & 8 & 69.6 & 21.5 & 11.5 & 36.6 & 19.9 & 0.418 & 0.256 & 0.170 & 0.116 & 0.189 & 0.153 \\
         && \textsc{SimRankVote} & 4 & 8 & 70.9 & 36.3 & 57.6 & 50.6 & 50.6 & 0.464 & 0.314 & 0.228 & 0.165 & 0.242 & 0.175 \\
         && \textbf{\ours (Ours)} & 4 & 8 & \textbf{72.3} & \textbf{47.6} & \textbf{61.3} & \textbf{53.3} & \textbf{53.4} & \textbf{{0.465}} & \textbf{{0.320}} & \textbf{{0.235}} & \textbf{{0.170}} & \textbf{{0.252}} & 0.178 \\
         &&\multicolumn{1}{l}{\color{NavyBlue}\grey{~$\Delta$}} & \grey{} & \grey{} & \color{NavyBlue}\grey{+4.3} & \color{NavyBlue}\grey{+22.1} & \color{NavyBlue}\grey{+54.6} & \color{NavyBlue}\grey{+14.0} & \color{NavyBlue}\grey{+33.2} & \color{NavyBlue}\grey{+0.302} & \color{NavyBlue}\grey{+0.242} & \color{NavyBlue}\grey{+0.190} & \color{NavyBlue}\grey{+0.143} & \color{NavyBlue}\grey{+0.104} & \grey{-0.003}\\
         \midrule
         \multirow{6}{*}{\makebox[0.3em][c]{\rotatebox{90}{Qwen2-VL~\citep{wang_qwen2-vl_2024}}}} & 7B & 
         {Zero-shot} & - & 0 & 58.6 & 26.8 & 5.8 & 36.1 & 10.6 & 0.286 & 0.159 & 0.101 & 0.066 & 0.149 & 0.138 \\
         && \textsc{SimRankOnce} & 1 & 2 & 63.8 & 43.2 & 55.3 & 46.3 & 45.4 & 0.457 & 0.310 & 0.223 & 0.158 & 0.249 & 0.189 \\
         && \textsc{RandExVote} & 4 & 8 & 62.3 & 21.0 & 14.0 & 36.6 & 14.3 & 0.397 & 0.240 & 0.157 & 0.104 & 0.188 & 0.170 \\
         && \textsc{SimRankVote} & 4 & 8 & 64.0 & 50.9 & 59.4 & 46.7 & 45.8 & 0.448 & 0.300 & 0.213 & 0.151 & 0.239 & 0.187 \\
         && {\textbf{\ours (Ours)}} & 4 & 8 & \textbf{66.3} & \textbf{51.5} & \textbf{59.6} & \textbf{48.7} & \textbf{49.3} & \textbf{0.471} & \textbf{0.329} & \textbf{0.244} & \textbf{0.176} & \textbf{0.265} & \textbf{0.189} \\
         && \multicolumn{1}{l}{\color{NavyBlue}\grey{~$\Delta$}} & \grey{} & \grey{} & \color{NavyBlue}\grey{+7.7} & \color{NavyBlue}\grey{+24.7} & \color{NavyBlue}\grey{+53.8} & \color{NavyBlue}\grey{+12.6} & \color{NavyBlue}\grey{+38.7} & \color{NavyBlue}\grey{+0.185} & \color{NavyBlue}\grey{+0.170} & \color{NavyBlue}\grey{+0.143}  & \color{NavyBlue}\grey{+0.110} & \color{NavyBlue}\grey{+0.116} & \color{NavyBlue}\grey{+0.051}\\
         \midrule
         \multirow{3}{*}{\makebox[0.3em][c]{\rotatebox{90}{\makecell{Oryx-1.5}}}} & \makecell{7B} & 
         {Zero-shot} & - & 0 & 58.6 & 28.3 & 3.8 & 11.9 & 10.7 & 0.242 & 0.126 & 0.077 & 0.049 & 0.140 & 0.151 \\
         & \citep{liu2024oryxmllmondemandspatialtemporal} & {\textbf{\ours (Ours)}} & 4 & 8 & 58.5 & \textbf{52.0} & \textbf{58.4} & \textbf{44.0} & \textbf{57.3} & \textbf{0.327} & \textbf{0.195} & \textbf{0.128} & \textbf{0.086} & \textbf{0.188} & \textbf{0.179} \\
         && \multicolumn{1}{l}{\color{NavyBlue}\grey{~$\Delta$}} & \grey{} & \grey{} & \grey{-0.1} & \color{NavyBlue}\grey{+23.7} & \color{NavyBlue}\grey{+54.6} & \color{NavyBlue}\grey{+32.1} & \color{NavyBlue}\grey{+46.6} & \color{NavyBlue}\grey{+0.085} & \color{NavyBlue}\grey{+0.069} & \color{NavyBlue}\grey{+0.052} & \color{NavyBlue}\grey{+0.038} & \color{NavyBlue}\grey{+0.047} & {\color{NavyBlue}\grey{+0.028}} \\
         \bottomrule
    \end{tabular}
    }
    \vspace{-0.2cm}
    \caption{\textbf{Main Results.} We evaluate our framework \ours on a wide range of \textit{out-of-distribution} benchmarks. $n= k/m$ denotes the (maximum) number of iterations, where $k$ is the total number of in-context examples, and $m$ is the number of examples used in each iteration. The difference ($\Delta$) denotes the score improvement of \ours over the vanilla 7B models without ICL. The LoRA FT baseline, fine-tuned on each of the downstream tasks, is shown in gray because it is not directly comparable to our method due to its additional training cost. The highest scores for each benchmark are shown in bold.}
    \label{tab:main}
\end{table*}
\section{Experiments}
\label{sec:experiments}
In this section, we validate our framework in four video-language tasks on a wide range of \textit{out-of-distribution} benchmarks using six datasets: multiple-choice QA on Animal Kingdom~\cite{ng_animal_2022}, open-ended QA on Sports-QA~\cite{li_sports-qa_2024} and PitVQA~\cite{he2024pitvqaimagegroundedtextembedding}, video classification on UCF-Crime~\cite{sultani_real-world_2019} and Drive\&Act~\cite{martin_driveact_2019}, and video captioning task on CapERA~\cite{bashmal_capera_2023}.

\subsection{Experiment setup}
We apply our framework to state-of-the-art open-source video LMMs, such as LLaVA-Video-7B~\citep{zhang_video_2024}, Qwen2-VL-7B~\citep{wang_qwen2-vl_2024}, and Oryx-1.5-7B~\citep{liu2024oryxmllmondemandspatialtemporal}, selected for their strong performance in general video understanding benchmarks.
For the similarity-based example selection, we use SentenceBERT~\citep{reimers_sentence-bert_2019} and InternVideo2~\citep{wang_internvideo2_2024} as text and video encoders, respectively. We sample each video at 1 frame per second across benchmarks, and for videos over 32 seconds, we uniformly select 32 frames from the entire sequence. For the confidence estimation, we use minimum value among the probabilities of generated tokens, and the confidence threshold is set to ${c_{th}=0.7}$ in multiple-choice QA and ${c_{th}=0.5}$ in other tasks. Our reasoning for this choice and additional experiments with various ${c_{th}}$ settings are provided in \cref{sec:rationale}. The total number of demonstrating examples is set to ${k=8}$. Note that considering the limited context length of the video LMMs, we set the number of in-context examples to ${m=2}$ at each iteration across all benchmarks. Therefore, the maximum number of iterations per test sample is ${n := k/m = 4}$. 

\subsection{Baselines}
We compare \ours with several strong baselines to rigorously assess our method's performance: GPT-4o~\citep{openai_gpt-4o_2024}, Gemini-1.5 Pro~\citep{geminiteam2024gemini15unlockingmultimodal}, Otter-7B~\citep{li_otter_2023}, MMICES~\citep{chen2023understandingimprovingincontextlearning}, Zero-shot, \textsc{SimRankOnce}, \textsc{RandExVote}, and \textsc{SimRankVote}.
Otter~\citep{li_otter_2023} is a large multimodal model fine-tuned on MIMIC-IT~\citep{li_mimic-it_2023}, which is a training set for image and video ICL. We exclude Flamingo~\citep{alayrac_flamingo_2022} and EILeV~\citep{yu_eliciting_2024} from the baselines because Flamingo does not provide a checkpoint, and EILeV is specifically trained on egocentric videos for the video captioning task.
MMICES~\citep{chen2023understandingimprovingincontextlearning} is an example selection method originally designed for image ICL, which we have extended to our video multimodal setting.
In the zero-shot baseline, responses are generated without any in-context examples.

To further evaluate the impact of each component within \ours, we use three additional baseline methods. \textsc{SimRankOnce} performs ICL a single time with only $m$ relevant examples chosen by our similarity ranking (as detailed in \cref{sec:sim_based_ex_selection}), without any iterative inference. \textsc{RandExVote} incorporates iterative inference but omits confidence-based selection, instead employing majority voting across answers generated using randomly selected in-context examples. \textsc{SimRankVote} also performs majority voting~(\ie, without confidence) but uses $k$ relevant examples chosen by our similarity ranking instead of random examples.
We also compare against the zero-shot performance of a larger 72B model.
For completeness, we include the performance of LLaVA-Video-7B fine-tuned with LoRA~\citep{hu_lora_2021} for some of the benchmark tasks. However, it is not directly comparable to ICL, since fine-tuning incurs a large training cost for each OOD task. 

\subsection{Datasets}
We test our method on six diverse datasets across four tasks, each focusing on specialized domains that are rarely covered in the training data of current video LMMs. For the multiple-choice QA task, we use the Animal Kingdom dataset~\citep{ng_animal_2022}, which includes animal videos annotated with actions across 140 different classes. For open-ended question-answering, we apply Sports-QA~\citep{li_sports-qa_2024}, covering various sports videos, and PitVQA~\citep{he2024pitvqaimagegroundedtextembedding}, designed specifically for visual QA in endonasal pituitary surgery videos. For open-ended QA, we utilize UCF-Crime~\citep{sultani_real-world_2019}, which categorizes types of crime in security camera footage into 13 groups, and Drive\&Act~\citep{martin_driveact_2019}, which identifies 34 types of activities performed by drivers in Kinect-IR videos. For the video captioning task, we evaluate models on the CapERA dataset~\citep{bashmal_capera_2023}, which is tailored to describe scenes from an aerial view. For more details, please see \cref{sec:appendix_B}.

\subsection{Quantitative results}
\cref{tab:main} summarizes the performance comparison with the baselines.
Overall, \ours outperforms the baselines across a wide variety of domains. See \cref{sec:additional_discussion} for a more detailed discussion on the baseline methods' performance.

\par\noindent\textbf{Multiple choice QA.}\quad
We observe that \ours achieves a +4.3\%p improvement in accuracy for recognizing animal actions compared to zero-shot LLaVA-Video-7B. Notably, \ours even surpasses the larger LLaVA-Video-72B model, despite using the smaller LLaVA-Video-7B model. This result indicates that simply increasing model size is not an effective solution for OOD video understanding. Furthermore, video ICL baselines~(\eg, \textsc{SimRankVote}) that rely solely on similarity ranking without confidence-based iteration perform less effectively than \ours, underscoring the advantages of our approach. 

\par\noindent\textbf{Open-ended QA.}\quad
We observe the highest accuracy improvement in open-ended QA tasks, with \ours achieving up to +54.6\%p and +22.1\%p improvements on PitVQA and Sports-QA, respectively, compared to zero-shot performance. This result highlights that ICL is particularly effective for out-of-distribution tasks where answers adhere to a specific format, even when the question itself is phrased simply. For example, given the question “\textit{What is the athlete doing?}” with a video showing a gymnast performing a move called \textit{salto backward tucked}, the model might respond with, “\textit{The athlete is jumping from the ground in a crowded gym.}” While this answer is not incorrect, it lacks specific domain knowledge users might expect in the context of the gymnastics domain, such as “\textit{The athlete is performing salto backward tucked}.” ICL enables the model to align with the expected answer style and apply relevant domain knowledge, yielding a response that is both more precise and contextually accurate.

\begin{table}[t!]
    \setlength{\shiftdown}{-0.3em}
    \newcommand{\grey}[1]{\cellcolor[HTML]{eeeeee}{#1}}
    \centering
    \resizebox{\linewidth}{!}{
    \begin{tabular}{lcccccc}
        \toprule
         & \makecell{\small Animal Kingdom} & \makecell{PitVQA} & \makecell{UCF-Crime} \\
         \midrule
         \makecell[l]{Baseline} & 68.0 & 6.7 & 39.3 \\
         \midrule
         Random & 68.4 \textcolor{NavyBlue}{(+0.4)} & 8.3 \textcolor{NavyBlue}{(+1.6)} & 38.4 \textcolor{NavyBlue}{(-0.9)} \\
         Text only & - & 33.1 \textcolor{NavyBlue}{(+24.8)} & - \\
         Video only & - & 29.1 \textcolor{NavyBlue}{(+22.4)} & - \\
         Text + Video & \textbf{72.3} \textcolor{NavyBlue}{(+4.3)} & \textbf{61.3} \textcolor{NavyBlue}{(+54.6)} & \textbf{53.3} \textcolor{NavyBlue}{(+14.0)} \\
         \bottomrule
    \end{tabular}
    }
    \caption{\textbf{Ablation on the similarity-based example selection method.} 
    We investigate which feature types are more helpful in selecting relevant examples.
    }
    \label{tab:ablation_demselect}
\end{table}

\par\noindent\textbf{Video Classification.}\quad
We observe that \ours significantly outperforms baselines by up to +14.0\%p more accuracy gain in UCF-Crime and +38.7\%p in Drive\&Act, demonstrating the effectiveness of our method in video classification tasks. Surprisingly, \ours exceeds the LoRA fine-tuned model on UCF-Crime. As \citet{bertsch_-context_2024} discovered, \ours tends to surpass fine-tuning in a small-size dataset. These outstanding results may be due to the fact that similar in-context examples also have a high chance of being from the same class, demonstrating the usefulness of our similarity-based example selection process.

\par\noindent\textbf{Video Captioning.}\quad
Across various metrics, our method consistently surpasses the baselines, demonstrating its effectiveness in video captioning tasks. The largest improvement was observed in BLEU-1, with the gains gradually decreasing from BLEU-2 to BLEU-4. This suggests that the examples in \ours provide useful words related to the query video, enhancing the relevance of generated captions.

\par\noindent\textbf{Comparison to proprietary models.}\quad
In \cref{tab:main}, we compare the performance of \ours with two leading proprietary APIs: GPT-4o~\citep{openai_gpt-4o_2024} and Gemini-1.5 Pro~\citep{geminiteam2024gemini15unlockingmultimodal}. Due to cost constraints, we evaluate one benchmark per task format. For this analysis, we use the results of \ours with the LLaVA-Video-7B model. While Gemini-1.5 Pro outperforms \ours by +0.6\%p in the Animal Kingdom benchmark, it falls short on the PitVQA and CapERA benchmarks. Similarly, GPT-4o achieves a +4.7\%p advantage over \ours in the UCF-Crime benchmark but fails on the Animal Kingdom, PitVQA, and CapERA tasks. Overall, \ours demonstrates an average performance improvement of +17.8\%p over GPT-4o and +14.2\%p over Gemini-1.5 Pro. These results highlight the effectiveness and robustness of \ours across diverse OOD videos, even when compared to significantly larger models. Importantly, \ours achieves this performance with a 7B-parameter backbone, while GPT-4o and Gemini-1.5 Pro are much larger models. Despite this inherent size disadvantage, \ours excels due to its efficient iterative in-context learning approach, underscoring its strong capabilities in challenging scenarios.

\subsection{Qualitative results}
We illustrate some representative samples from our benchmarks in \cref{fig:qualitative}. In the first row, we show an example from UCF-Crime~\citep{sultani_real-world_2019}. While the ground truth label is \textit{Shoplifting}, the zero-shot model answers incorrectly as \textit{Normal Event}. In contrast, our model selects two relevant demonstrations—a \textit{Normal Event} and a \textit{Shoplifting} example in the first round. Using these two in-context examples, our model is able to answer the original question correctly. This demonstrates that our similarity-based example selection allows the model to select relevant and helpful examples.

In the second row, we show an example from Sports-QA~\cite{li_sports-qa_2024} where the vanilla model makes an incorrect prediction. In the first iteration, the initial response of our model is incorrect too, with low confidence. By the second iteration, our model could answer correctly with high confidence. Without the iterative approach, the model would not have reached the correct answer, highlighting the effectiveness of iterative selection, especially when initial demonstrations are suboptimal.

In contrast, in the third row, even though the model predicts the right answer in the first iteration, it lacks confidence due to insufficiently relevant demonstrations, as there is no example from the same class as the ground truth. Only after observing relevant examples in the second iteration does it gain enough confidence in the correct answer, suggesting that its initial correct prediction was a fluke. We provide more qualitative results in \cref{sec:additional_qualitative}.

\section{Analysis}
\label{sec:analysis}

\begin{table}[t!]
    \setlength{\shiftdown}{-0.3em}
    \newcommand{\grey}[1]{\cellcolor[HTML]{eeeeee}\textcolor{NavyBlue}{#1}}
    \centering
    \resizebox{0.95\linewidth}{!}{
    \renewcommand{\arraystretch}{1.1}
    \renewcommand{\tabcolsep}{8pt}
    \begin{tabular}{lcccccccccc}
        \toprule
         & \multirow{2}{*}[\shiftdown]{\makecell{Animal\\\small Kingdom}} & \multirow{2}{*}[\shiftdown]{\makecell{Pit-\\VQA}} & \multirow{2}{*}[\shiftdown]{\makecell{UCF-\\Crime}} & \multicolumn{2}{c}{CapERA} \\
         \cmidrule(l{2pt}r{2pt}){5-6}
         &&&& {\scriptsize BLEU-4} & {\scriptsize METEOR} \\
         \midrule
         \makecell[l]{Baseline} & 68.0 & 6.7 & 39.3 &  0.027 & 0.149\\
         \midrule
         $k=2$ & 69.3 & 54.0 & 50.7 & 0.160 & 0.245 \\
         $k=4$ & 71.0 & 59.5 & 52.7 & 0.168 & 0.251  \\
         $k=8$ & 72.3 & \textbf{61.3} & 53.3 & \textbf{0.170} & \textbf{0.253} \\
         \grey{~$\Delta$} & \grey{+4.3} & \grey{+54.6} & \grey{+14.0} &  \grey{+0.143} & \grey{+0.104} \\
         $k=16$ & \textbf{73.2} & 61.2 & \textbf{53.6} & 0.169 & 0.250 \\
         \grey{~$\Delta$} & \grey{+5.2} & \grey{+54.5} & \grey{+14.3} & \grey{+0.142} & \grey{+0.101} \\
         \bottomrule
    \end{tabular}
    }
    \caption{\textbf{Ablation on the total number of demonstrations ($k$).} The highest scores for each benchmark are shown in bold.}
    \label{tab:ablation_niters}
\end{table}

In this section, we analyze the impact of each component and hyperparameter of our method on its performance. For our analysis, we select one representative dataset for each video-language task: Animal Kingdom~\citep{ng_animal_2022}, PitVQA~\citep{he2024pitvqaimagegroundedtextembedding}, UCF-Crime~\citep{sultani_real-world_2019}, and CapERA~\citep{bashmal_capera_2023}.


\subsection{Demonstration selection method}
\noindent The impact of similarity-based example selection is studied by analyzing which query features (text or video) most influence the effectiveness of example selection.
We compare our method with random demonstration selection, text feature-only selection, and video feature-only selection across three datasets. The results, shown in~\cref{tab:ablation_demselect}, indicate that our similarity-based selection method using both text and video features outperforms the random selection baseline by a large margin. For instance, on PitVQA, similarity-based selection leads to up to +53.0\%p performance increase over random selection. Furthermore, we observe that both text and video features contribute to similarity-based selection, as using only one of the two yields lower scores.

\subsection{Total number of available examples (\texorpdfstring{$k$}{k})}
\noindent We investigate the impact of confidence-based iterative inference by comparing our method, which uses multiple iterations, with a single-iteration baseline across four datasets. The results, shown in \cref{tab:ablation_niters}, indicate that increasing the total number of demonstrations generally helps the benchmark performance. For example, compared to performing only one iteration ($k$ = 2), we observe up to +7.2\%p performance increase in PitVQA when using eight iterations ($k$ = 16). This is consistent with the general observation that more ICL demonstrations lead to higher performance in \citet{bertsch_-context_2024}. This demonstrates that our confidence-based iterative inference can serve as an alternative to using many demonstrations at once when the context length is limited in a video ICL setting.

\begin{table}[t!]
    \setlength{\shiftdown}{-0.3em}
    \newcommand{\grey}[1]{\cellcolor[HTML]{eeeeee}\textcolor{NavyBlue}{#1}}
    \centering
    \resizebox{0.95\linewidth}{!}{
    \renewcommand{\arraystretch}{1.15}
    \begin{tabular}{lccccccccccc}
        \toprule
         & \multirow{2}{*}[\shiftdown]{\makecell{Animal\\\small Kingdom}} & \multirow{2}{*}[\shiftdown]{\makecell{Pit-\\VQA}} & \multirow{2}{*}[\shiftdown]{\makecell{UCF-\\Crime}} & \multicolumn{2}{c}{CapERA} \\
         \cmidrule(l{2pt}r{2pt}){5-6}
         &&&& {\scriptsize BLEU-4} & {\scriptsize METEOR}  \\
         \midrule
         \makecell[l]{Baseline} & 68.0 & 6.7 & 39.3 & 0.027 & 0.149 \\
         \midrule
         Verbalization & 69.7 & 54.6 & 51.8 & 0.160 & 0.245 \\
         \grey{~$\Delta$} & \grey{+1.7} & \grey{+47.9} & \grey{+12.5} & \grey{+0.133} & \grey{+0.096} \\
         Trained Probe & 71.7 & 42.5 & 52.7 & 0.162 & 0.250 \\
         \grey{~$\Delta$} & \grey{+3.7} & \grey{+35.8} & \grey{+13.4} & \grey{+0.135} & \grey{+0.101} \\
         Token Prob. & \textbf{72.3} & \textbf{61.3} & \textbf{53.3} & \textbf{0.170} & \textbf{0.253} \\
         \grey{~$\Delta$} & \grey{+4.3} & \grey{+54.6} & \grey{+14.0} & \grey{+0.143} & \grey{+0.104} \\
         \bottomrule
    \end{tabular}
    }
    \caption{\textbf{Ablation on the confidence estimation method.} `Token Prob.' refers to Token Probability. The highest scores for each benchmark are shown in bold.}
    \label{tab:ablation_confestim}
\end{table}
\subsection{Confidence estimation method}\label{subsec:conf}
\noindent We evaluate our confidence estimation method, token probability~\citep{huang2023lookleapexploratorystudy}, against two alternative approaches: \textit{verbalization}~\citep{xiong2024can, lin_teaching_2022} and \textit{trained probe}~\citep{azaria_internal_2023, kadavath2022languagemodelsmostlyknow}, across four benchmark datasets. For the trained probe, following \citet{azaria_internal_2023}, we pre-train a 4-layer MLP as a confidence estimator using the hidden states of the last token on a subset of diverse video-language datasets and benchmarks~\citep{zhang_video_2024, li2023seedbenchbenchmarkingmultimodalllms, fu2024videommefirstevercomprehensiveevaluation, li_mvbench_2023}. The confidence score from the trained probe is used in the same way as our main method. For verbalization, we ask the model if it is confident enough to respond before generating the answer. If it answers yes, we proceed with the response; if not, we iterate with the next set of examples. The results are shown in \cref{tab:ablation_confestim}.

The token probability method outperforms other approaches, with the trained probe following closely, while verbalization performs the worst. This somewhat contradicts the findings in~\cite{mahaut_factual_2024}, which report that the trained probe estimates model confidence most accurately. A possible explanation is that, unlike other zero-shot methods, the trained probe’s reliance on pre-training makes it more dependent on its specific training data, limiting its ability to generalize, potentially reducing robustness on OOD tasks~\citep{orgad2024llmsknowshowintrinsic}.

\subsection{Most confident responses across iterations}
\begin{figure}[t]
    \centering
    \includegraphics[width=\linewidth,trim={.5cm .2cm .5cm .3cm}]{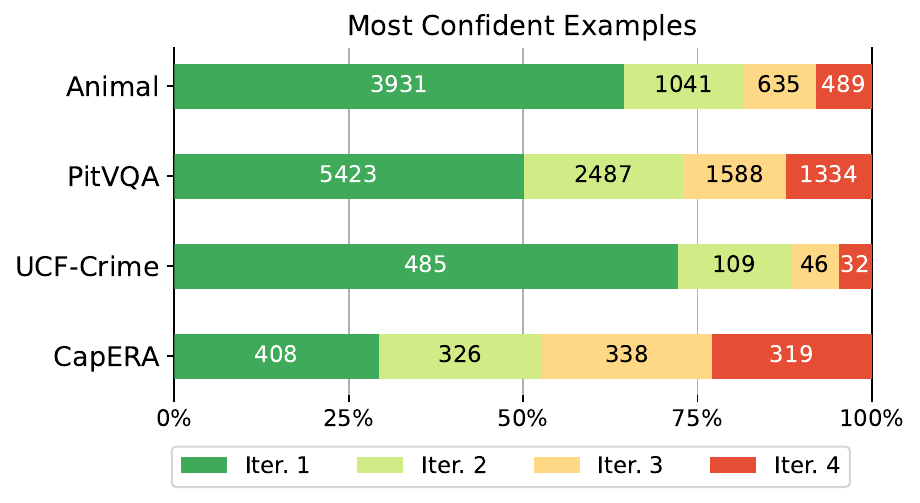}
    \vspace{-0.5cm}
    \caption{\textbf{Most confident examples.} The numbers on each bar represent the number of test samples where the corresponding iteration ended up having the highest confidence score. The x-axis represents the proportion of each iteration.}
    \label{fig:confichart}
\end{figure}
We examine which iteration yields the highest confidence scores. \cref{fig:confichart} illustrates the confidence across four datasets and highlights the iteration, out of four, that produced the most confident answers. Notably, despite using the most similar examples in the first iteration, only 29\% of responses in CapERA and 50\% in PitVQA achieved enough confidence to surpass the threshold in the first round. This finding suggests that relying only on the first iteration may be inadequate. Multiple iterations allow for adding more examples, which may boost confidence in later rounds.
\section{Conclusion}
In this paper, we propose \ours, a novel framework for in-context learning for out-of-distribution video-language tasks with large multimodal models. We address the challenge of multimodal in-context examples exceeding the token length limit of LMMs by employing similarity-based demonstration selection and confidence-based iteration. Extensive experimental results highlight the effectiveness of our method for OOD videos.
\ours is training-free, enabling rapid adaptation to novel tasks while offering a more efficient and feasible alternative to naive in-context learning at inference time.

\section{Acknowledgments}
This work was supported by Institute of Information \& communications Technology Planning \& Evaluation (IITP) grant funded by the Korea government (MSIT) (No. RS-2022-00187238, Development of Large Korean Language Model Technology for Efficient Pre-training) and (No. RS-2019-II190075, Artificial Intelligence Graduate School Program(KAIST)).
{
    \small
    \bibliographystyle{ieeenat_fullname}
    \bibliography{main}

\begin{thebibliography}{59}
\providecommand{\natexlab}[1]{#1}
\providecommand{\url}[1]{\texttt{#1}}
\expandafter\ifx\csname urlstyle\endcsname\relax
  \providecommand{\doi}[1]{doi: #1}\else
  \providecommand{\doi}{doi: \begingroup \urlstyle{rm}\Url}\fi

\bibitem[Agarwal et~al.(2024)Agarwal, Singh, Zhang, Bohnet, Rosias, Chan, Zhang, Anand, Abbas, Nova, et~al.]{agarwal2024manyshotincontextlearning}
Rishabh Agarwal, Avi Singh, Lei~M Zhang, Bernd Bohnet, Luis Rosias, Stephanie Chan, Biao Zhang, Ankesh Anand, Zaheer Abbas, Azade Nova, et~al.
\newblock Many-shot in-context learning.
\newblock \emph{arXiv preprint arXiv:2404.11018}, 2024.

\bibitem[Alayrac et~al.(2022)Alayrac, Donahue, Luc, Miech, Barr, Hasson, Lenc, Mensch, Millican, Reynolds, et~al.]{alayrac_flamingo_2022}
Jean-Baptiste Alayrac, Jeff Donahue, Pauline Luc, Antoine Miech, Iain Barr, Yana Hasson, Karel Lenc, Arthur Mensch, Katherine Millican, Malcolm Reynolds, et~al.
\newblock Flamingo: a visual language model for few-shot learning.
\newblock \emph{Advances in neural information processing systems}, 35:\penalty0 23716--23736, 2022.

\bibitem[Azaria and Mitchell(2023)]{azaria_internal_2023}
Amos Azaria and Tom Mitchell.
\newblock The {Internal} {State} of an {LLM} {Knows} {When} {It}'s {Lying}.
\newblock In \emph{Findings of the {Association} for {Computational} {Linguistics}: {EMNLP} 2023}, pages 967--976, Singapore, 2023. Association for Computational Linguistics.

\bibitem[Baldassini et~al.(2024)Baldassini, Shukor, Cord, Soulier, and Piwowarski]{baldassini_what_2024}
Folco~Bertini Baldassini, Mustafa Shukor, Matthieu Cord, Laure Soulier, and Benjamin Piwowarski.
\newblock What makes multimodal in-context learning work?
\newblock In \emph{Proceedings of the IEEE/CVF Conference on Computer Vision and Pattern Recognition (CVPR) Workshops}, pages 1539--1550, 2024.

\bibitem[Bashmal et~al.(2023)Bashmal, Bazi, Al~Rahhal, Zuair, and Melgani]{bashmal_capera_2023}
Laila Bashmal, Yakoub Bazi, Mohamad~Mahmoud Al~Rahhal, Mansour Zuair, and Farid Melgani.
\newblock Capera: Captioning events in aerial videos.
\newblock \emph{Remote Sensing}, 15\penalty0 (8):\penalty0 2139, 2023.

\bibitem[Bertsch et~al.(2024)Bertsch, Ivgi, Alon, Berant, Gormley, and Neubig]{bertsch_-context_2024}
Amanda Bertsch, Maor Ivgi, Uri Alon, Jonathan Berant, Matthew~R Gormley, and Graham Neubig.
\newblock In-context learning with long-context models: An in-depth exploration.
\newblock \emph{arXiv preprint arXiv:2405.00200}, 2024.

\bibitem[Brown et~al.(2020)Brown, Mann, Ryder, Subbiah, Kaplan, Dhariwal, Neelakantan, Shyam, Sastry, Askell, Agarwal, Herbert-Voss, Krueger, Henighan, Child, Ramesh, Ziegler, Wu, Winter, Hesse, Chen, Sigler, Litwin, Gray, Chess, Clark, Berner, McCandlish, Radford, Sutskever, and Amodei]{brown2020icl}
Tom Brown, Benjamin Mann, Nick Ryder, Melanie Subbiah, Jared~D Kaplan, Prafulla Dhariwal, Arvind Neelakantan, Pranav Shyam, Girish Sastry, Amanda Askell, Sandhini Agarwal, Ariel Herbert-Voss, Gretchen Krueger, Tom Henighan, Rewon Child, Aditya Ramesh, Daniel Ziegler, Jeffrey Wu, Clemens Winter, Chris Hesse, Mark Chen, Eric Sigler, Mateusz Litwin, Scott Gray, Benjamin Chess, Jack Clark, Christopher Berner, Sam McCandlish, Alec Radford, Ilya Sutskever, and Dario Amodei.
\newblock Language models are few-shot learners.
\newblock In \emph{Advances in Neural Information Processing Systems}, pages 1877--1901, 2020.

\bibitem[Chen et~al.(2024{\natexlab{a}})Chen, Han, He, Buckley, Torr, Tresp, and Gu]{chen2023understandingimprovingincontextlearning}
Shuo Chen, Zhen Han, Bailan He, Mark Buckley, Philip Torr, Volker Tresp, and Jindong Gu.
\newblock Understanding and improving in-context learning on vision-language models.
\newblock In \emph{ICLR 2024 Workshop on Mathematical and Empirical Understanding of Foundation Models}, 2024{\natexlab{a}}.

\bibitem[Chen et~al.(2024{\natexlab{b}})Chen, Zhang, Gao, Li, Sun, Zhang, Li, and Ji]{chen_mmict_2024}
Tao Chen, Enwei Zhang, Yuting Gao, Ke Li, Xing Sun, Yan Zhang, Hui Li, and Rongrong Ji.
\newblock Mmict: Boosting multi-modal fine-tuning with in-context examples.
\newblock \emph{ACM Transactions on Multimedia Computing, Communications and Applications}, 2024{\natexlab{b}}.

\bibitem[Chen et~al.(2024{\natexlab{c}})Chen, Chen, Jhamtani, Xia, Shin, Eisner, and Van~Durme]{chen2024learningretrieveiterativelyincontext}
Yunmo Chen, Tongfei Chen, Harsh Jhamtani, Patrick Xia, Richard Shin, Jason Eisner, and Benjamin Van~Durme.
\newblock Learning to retrieve iteratively for in-context learning.
\newblock In \emph{Proceedings of the 2024 Conference on Empirical Methods in Natural Language Processing}, pages 7156--7168, Miami, Florida, USA, 2024{\natexlab{c}}. Association for Computational Linguistics.

\bibitem[Cheng et~al.(2024)Cheng, Leng, Zhang, Xin, Li, Chen, Zhu, Zhang, Luo, Zhao, et~al.]{cheng2024videollama2advancingspatialtemporal}
Zesen Cheng, Sicong Leng, Hang Zhang, Yifei Xin, Xin Li, Guanzheng Chen, Yongxin Zhu, Wenqi Zhang, Ziyang Luo, Deli Zhao, et~al.
\newblock Videollama 2: Advancing spatial-temporal modeling and audio understanding in video-llms.
\newblock \emph{arXiv preprint arXiv:2406.07476}, 2024.

\bibitem[Feng et~al.(2024)Feng, Shi, Wang, Ding, Balachandran, and Tsvetkov]{feng2024donthallucinateabstainidentifying}
Shangbin Feng, Weijia Shi, Yike Wang, Wenxuan Ding, Vidhisha Balachandran, and Yulia Tsvetkov.
\newblock Don't hallucinate, abstain: Identifying llm knowledge gaps via multi-llm collaboration.
\newblock \emph{arXiv preprint arXiv:2402.00367}, 2024.

\bibitem[Fu et~al.(2024)Fu, Dai, Luo, Li, Ren, Zhang, Wang, Zhou, Shen, Zhang, et~al.]{fu2024videommefirstevercomprehensiveevaluation}
Chaoyou Fu, Yuhan Dai, Yongdong Luo, Lei Li, Shuhuai Ren, Renrui Zhang, Zihan Wang, Chenyu Zhou, Yunhang Shen, Mengdan Zhang, et~al.
\newblock Video-mme: The first-ever comprehensive evaluation benchmark of multi-modal llms in video analysis.
\newblock \emph{arXiv preprint arXiv:2405.21075}, 2024.

\bibitem[Gao et~al.(2024)Gao, Qiao, Cao, Wang, and Li]{gao_aim_2024}
Jun Gao, Qian Qiao, Ziqiang Cao, Zili Wang, and Wenjie Li.
\newblock Aim: Let any multi-modal large language models embrace efficient in-context learning.
\newblock \emph{arXiv preprint arXiv:2406.07588}, 2024.

\bibitem[Guo et~al.(2017)Guo, Pleiss, Sun, and Weinberger]{guo2017calibrationmodernneuralnetworks}
Chuan Guo, Geoff Pleiss, Yu Sun, and Kilian~Q. Weinberger.
\newblock On calibration of modern neural networks.
\newblock In \emph{Proceedings of the 34th International Conference on Machine Learning}, pages 1321--1330. PMLR, 2017.

\bibitem[Han et~al.(2024)Han, Zhou, He, Wang, Wu, Yin, Khan, Yao, Liu, and Zhang]{han2024doesgpt4visionadaptdistribution}
Zhongyi Han, Guanglin Zhou, Rundong He, Jindong Wang, Tailin Wu, Yilong Yin, Salman Khan, Lina Yao, Tongliang Liu, and Kun Zhang.
\newblock How well does {GPT}-4v(ision) adapt to distribution shifts? a preliminary investigation.
\newblock In \emph{ICLR 2024 Workshop on Mathematical and Empirical Understanding of Foundation Models}, 2024.

\bibitem[He et~al.(2024)He, Xu, Das, Khan, Bano, Marcus, Stoyanov, Clarkson, and Islam]{he2024pitvqaimagegroundedtextembedding}
Runlong He, Mengya Xu, Adrito Das, Danyal~Z Khan, Sophia Bano, Hani~J Marcus, Danail Stoyanov, Matthew~J Clarkson, and Mobarakol Islam.
\newblock Pitvqa: Image-grounded text embedding llm for visual question answering in pituitary surgery.
\newblock In \emph{International Conference on Medical Image Computing and Computer-Assisted Intervention}, pages 488--498, 2024.

\bibitem[Hu et~al.(2021)Hu, Shen, Wallis, Allen-Zhu, Li, Wang, Wang, and Chen]{hu_lora_2021}
Edward~J. Hu, Yelong Shen, Phillip Wallis, Zeyuan Allen-Zhu, Yuanzhi Li, Shean Wang, Lu Wang, and Weizhu Chen.
\newblock {LoRA}: {Low}-{Rank} {Adaptation} of {Large} {Language} {Models}, 2021.
\newblock arXiv:2106.09685 [cs].

\bibitem[Huang et~al.(2023{\natexlab{a}})Huang, Yu, Ma, Zhong, Feng, Wang, Chen, Peng, Feng, Qin, et~al.]{huang2023surveyhallucinationlargelanguage}
Lei Huang, Weijiang Yu, Weitao Ma, Weihong Zhong, Zhangyin Feng, Haotian Wang, Qianglong Chen, Weihua Peng, Xiaocheng Feng, Bing Qin, et~al.
\newblock A survey on hallucination in large language models: Principles, taxonomy, challenges, and open questions.
\newblock \emph{arXiv preprint arXiv:2311.05232}, 2023{\natexlab{a}}.

\bibitem[Huang et~al.(2023{\natexlab{b}})Huang, Song, Wang, Zhao, Chen, Juefei-Xu, and Ma]{huang2023lookleapexploratorystudy}
Yuheng Huang, Jiayang Song, Zhijie Wang, Shengming Zhao, Huaming Chen, Felix Juefei-Xu, and Lei Ma.
\newblock Look before you leap: An exploratory study of uncertainty measurement for large language models.
\newblock \emph{arXiv preprint arXiv:2307.10236}, 2023{\natexlab{b}}.

\bibitem[Jiang et~al.(2024)Jiang, Irvin, Wang, Chaudhry, Chen, and Ng]{jiang2024manyshotincontextlearningmultimodal}
Yixing Jiang, Jeremy~Andrew Irvin, Ji~Hun Wang, Muhammad~Ahmed Chaudhry, Jonathan~H Chen, and Andrew~Y. Ng.
\newblock Many-shot in-context learning in multimodal foundation models.
\newblock In \emph{ICML 2024 Workshop on In-Context Learning}, 2024.

\bibitem[Kadavath et~al.(2022)Kadavath, Conerly, Askell, Henighan, Drain, Perez, Schiefer, Hatfield-Dodds, DasSarma, Tran-Johnson, Johnston, El-Showk, Jones, Elhage, Hume, Chen, Bai, Bowman, Fort, Ganguli, Hernandez, Jacobson, Kernion, Kravec, Lovitt, Ndousse, Olsson, Ringer, Amodei, Brown, Clark, Joseph, Mann, McCandlish, Olah, and Kaplan]{kadavath2022languagemodelsmostlyknow}
Saurav Kadavath, Tom Conerly, Amanda Askell, Tom Henighan, Dawn Drain, Ethan Perez, Nicholas Schiefer, Zac Hatfield-Dodds, Nova DasSarma, Eli Tran-Johnson, Scott Johnston, Sheer El-Showk, Andy Jones, Nelson Elhage, Tristan Hume, Anna Chen, Yuntao Bai, Sam Bowman, Stanislav Fort, Deep Ganguli, Danny Hernandez, Josh Jacobson, Jackson Kernion, Shauna Kravec, Liane Lovitt, Kamal Ndousse, Catherine Olsson, Sam Ringer, Dario Amodei, Tom Brown, Jack Clark, Nicholas Joseph, Ben Mann, Sam McCandlish, Chris Olah, and Jared Kaplan.
\newblock Language models (mostly) know what they know.
\newblock \emph{arXiv preprint arXiv:2207.05221}, 2022.

\bibitem[Khattak et~al.(2024)Khattak, Naeem, Hassan, Naseer, Tombari, Khan, and Khan]{khattak2024goodvideolmmcomplex}
Muhammad~Uzair Khattak, Muhammad~Ferjad Naeem, Jameel Hassan, Muzammal Naseer, Federico Tombari, Fahad~Shahbaz Khan, and Salman Khan.
\newblock Complex video reasoning and robustness evaluation suite for video-lmms.
\newblock \emph{arXiv preprint arXiv:2405.03690}, 2024.

\bibitem[Kumar et~al.(2024)Kumar, Morabito, Umbet, Kabbara, and Emami]{kumar2024confidencehoodinvestigationconfidenceprobability}
Abhishek Kumar, Robert Morabito, Sanzhar Umbet, Jad Kabbara, and Ali Emami.
\newblock Confidence under the hood: An investigation into the confidence-probability alignment in large language models.
\newblock In \emph{Proceedings of the 62nd Annual Meeting of the Association for Computational Linguistics (Volume 1: Long Papers)}, pages 315--334, Bangkok, Thailand, 2024. Association for Computational Linguistics.

\bibitem[Li et~al.(2023{\natexlab{a}})Li, Wang, Wang, Ge, Ge, and Shan]{li2023seedbenchbenchmarkingmultimodalllms}
Bohao Li, Rui Wang, Guangzhi Wang, Yuying Ge, Yixiao Ge, and Ying Shan.
\newblock Seed-bench: Benchmarking multimodal llms with generative comprehension.
\newblock \emph{arXiv preprint arXiv:2307.16125}, 2023{\natexlab{a}}.

\bibitem[Li et~al.(2023{\natexlab{b}})Li, Zhang, Chen, Wang, Pu, Yang, Li, and Liu]{li_mimic-it_2023}
Bo Li, Yuanhan Zhang, Liangyu Chen, Jinghao Wang, Fanyi Pu, Jingkang Yang, Chunyuan Li, and Ziwei Liu.
\newblock Mimic-it: Multi-modal in-context instruction tuning.
\newblock \emph{arXiv preprint arXiv:2306.05425}, 2023{\natexlab{b}}.

\bibitem[Li et~al.(2023{\natexlab{c}})Li, Zhang, Chen, Wang, Yang, and Liu]{li_otter_2023}
Bo Li, Yuanhan Zhang, Liangyu Chen, Jinghao Wang, Jingkang Yang, and Ziwei Liu.
\newblock Otter: {A} {Multi}-{Modal} {Model} with {In}-{Context} {Instruction} {Tuning}, 2023{\natexlab{c}}.
\newblock arXiv:2305.03726.

\bibitem[Li et~al.(2024{\natexlab{a}})Li, Deng, Ke, Liu, Rahmani, Guo, Schiele, and Chen]{li_sports-qa_2024}
Haopeng Li, Andong Deng, Qiuhong Ke, Jun Liu, Hossein Rahmani, Yulan Guo, Bernt Schiele, and Chen Chen.
\newblock Sports-qa: A large-scale video question answering benchmark for complex and professional sports.
\newblock \emph{arXiv preprint arXiv:2401.01505}, 2024{\natexlab{a}}.

\bibitem[Li et~al.(2023{\natexlab{d}})Li, Patel, Vi{\'e}gas, Pfister, and Wattenberg]{li2023inferencetime}
Kenneth Li, Oam Patel, Fernanda Vi{\'e}gas, Hanspeter Pfister, and Martin Wattenberg.
\newblock Inference-time intervention: Eliciting truthful answers from a language model.
\newblock In \emph{Thirty-seventh Conference on Neural Information Processing Systems}, 2023{\natexlab{d}}.

\bibitem[Li et~al.(2024{\natexlab{b}})Li, Wang, He, Li, Wang, Liu, Wang, Xu, Chen, Luo, Wang, and Qiao]{li_mvbench_2023}
Kunchang Li, Yali Wang, Yinan He, Yizhuo Li, Yi Wang, Yi Liu, Zun Wang, Jilan Xu, Guo Chen, Ping Luo, Limin Wang, and Yu Qiao.
\newblock Mvbench: A comprehensive multi-modal video understanding benchmark.
\newblock In \emph{Proceedings of the IEEE/CVF Conference on Computer Vision and Pattern Recognition (CVPR)}, pages 22195--22206, 2024{\natexlab{b}}.

\bibitem[Li et~al.(2024{\natexlab{c}})Li, Peng, Chen, Gao, and Yang]{li2023configuregoodincontextsequence}
Li Li, Jiawei Peng, Huiyi Chen, Chongyang Gao, and Xu Yang.
\newblock How to configure good in-context sequence for visual question answering.
\newblock In \emph{Proceedings of the IEEE/CVF Conference on Computer Vision and Pattern Recognition (CVPR)}, pages 26710--26720, 2024{\natexlab{c}}.

\bibitem[Lin et~al.(2022)Lin, Hilton, and Evans]{lin_teaching_2022}
Stephanie Lin, Jacob Hilton, and Owain Evans.
\newblock Teaching {Models} to {Express} {Their} {Uncertainty} in {Words}.
\newblock \emph{Transactions on Machine Learning Research}, 2022.

\bibitem[Liu et~al.(2024{\natexlab{a}})Liu, Liu, Huang, Zhan, Sun, Deng, Wei, and Zhang]{liu_se2_2024}
Haoyu Liu, Jianfeng Liu, Shaohan Huang, Yuefeng Zhan, Hao Sun, Weiwei Deng, Furu Wei, and Qi Zhang.
\newblock $se^2$: Sequential example selection for in-context learning.
\newblock In \emph{Findings of the Association for Computational Linguistics: ACL 2024}, pages 5262--5284, Bangkok, Thailand, 2024{\natexlab{a}}. Association for Computational Linguistics.

\bibitem[Liu et~al.(2024{\natexlab{b}})Liu, Dong, Liu, Hu, Lu, and Rao]{liu2024oryxmllmondemandspatialtemporal}
Zuyan Liu, Yuhao Dong, Ziwei Liu, Winston Hu, Jiwen Lu, and Yongming Rao.
\newblock Oryx mllm: On-demand spatial-temporal understanding at arbitrary resolution.
\newblock \emph{arXiv preprint arXiv:2409.12961}, 2024{\natexlab{b}}.

\bibitem[Mahaut et~al.(2024)Mahaut, Aina, Czarnowska, Hardalov, Müller, and Marquez]{mahaut_factual_2024}
Matéo Mahaut, Laura Aina, Paula Czarnowska, Momchil Hardalov, Thomas Müller, and Lluis Marquez.
\newblock Factual {Confidence} of {LLMs}: on {Reliability} and {Robustness} of {Current} {Estimators}.
\newblock In \emph{Proceedings of the 62nd {Annual} {Meeting} of the {Association} for {Computational} {Linguistics} ({Volume} 1: {Long} {Papers})}, pages 4554--4570, Bangkok, Thailand, 2024. Association for Computational Linguistics.

\bibitem[Marcu et~al.(2024)Marcu, Chen, H{\"u}nermann, Karnsund, Hanotte, Chidananda, Nair, Badrinarayanan, Kendall, Shotton, et~al.]{marcu2024lingoqavisualquestionanswering}
Ana-Maria Marcu, Long Chen, Jan H{\"u}nermann, Alice Karnsund, Benoit Hanotte, Prajwal Chidananda, Saurabh Nair, Vijay Badrinarayanan, Alex Kendall, Jamie Shotton, et~al.
\newblock Lingoqa: Visual question answering for autonomous driving.
\newblock In \emph{European Conference on Computer Vision}, pages 252--269, 2024.

\bibitem[Martin et~al.(2019)Martin, Roitberg, Haurilet, Horne, Rei{\ss}, Voit, and Stiefelhagen]{martin_driveact_2019}
Manuel Martin, Alina Roitberg, Monica Haurilet, Matthias Horne, Simon Rei{\ss}, Michael Voit, and Rainer Stiefelhagen.
\newblock Drive\&act: A multi-modal dataset for fine-grained driver behavior recognition in autonomous vehicles.
\newblock In \emph{Proceedings of the IEEE/CVF International Conference on Computer Vision}, pages 2801--2810, 2019.

\bibitem[Ng et~al.(2022)Ng, Ong, Zheng, Ni, Yeo, and Liu]{ng_animal_2022}
Xun~Long Ng, Kian~Eng Ong, Qichen Zheng, Yun Ni, Si~Yong Yeo, and Jun Liu.
\newblock Animal kingdom: A large and diverse dataset for animal behavior understanding.
\newblock In \emph{Proceedings of the IEEE/CVF Conference on Computer Vision and Pattern Recognition (CVPR)}, pages 19023--19034, 2022.

\bibitem[Orgad et~al.(2024)Orgad, Toker, Gekhman, Reichart, Szpektor, Kotek, and Belinkov]{orgad2024llmsknowshowintrinsic}
Hadas Orgad, Michael Toker, Zorik Gekhman, Roi Reichart, Idan Szpektor, Hadas Kotek, and Yonatan Belinkov.
\newblock Llms know more than they show: On the intrinsic representation of llm hallucinations.
\newblock \emph{arXiv preprint arXiv:2410.02707}, 2024.

\bibitem[Patraucean et~al.(2023)Patraucean, Smaira, Gupta, Continente, Markeeva, Banarse, Koppula, Heyward, Malinowski, Yang, Doersch, Matejovicova, Sulsky, Miech, Fr{\'e}chette, Klimczak, Koster, Zhang, Winkler, Aytar, Osindero, Damen, Zisserman, and Carreira]{patraucean_perception_2023}
Viorica Patraucean, Lucas Smaira, Ankush Gupta, Adria~Recasens Continente, Larisa Markeeva, Dylan~Sunil Banarse, Skanda Koppula, Joseph Heyward, Mateusz Malinowski, Yi Yang, Carl Doersch, Tatiana Matejovicova, Yury Sulsky, Antoine Miech, Alexandre Fr{\'e}chette, Hanna Klimczak, Raphael Koster, Junlin Zhang, Stephanie Winkler, Yusuf Aytar, Simon Osindero, Dima Damen, Andrew Zisserman, and Joao Carreira.
\newblock Perception test: A diagnostic benchmark for multimodal video models.
\newblock In \emph{Thirty-seventh Conference on Neural Information Processing Systems Datasets and Benchmarks Track}, 2023.

\bibitem[Qin et~al.(2023)Qin, Zhang, Chen, Dagar, and Ye]{qin_ids_2024}
Chengwei Qin, Aston Zhang, Chen Chen, Anirudh Dagar, and Wenming Ye.
\newblock In-context learning with iterative demonstration selection.
\newblock \emph{arXiv preprint arXiv:2310.09881}, 2023.

\bibitem[Reimers and Gurevych(2019)]{reimers_sentence-bert_2019}
Nils Reimers and Iryna Gurevych.
\newblock Sentence-{BERT}: Sentence embeddings using {S}iamese {BERT}-networks.
\newblock In \emph{Proceedings of the 2019 Conference on Empirical Methods in Natural Language Processing and the 9th International Joint Conference on Natural Language Processing (EMNLP-IJCNLP)}, pages 3982--3992, Hong Kong, China, 2019. Association for Computational Linguistics.

\bibitem[Sterner et~al.(2024)Sterner, Lin, Chen, and Byrne]{sterner_few-shot_2024}
Igor Sterner, Weizhe Lin, Jinghong Chen, and Bill Byrne.
\newblock Few-shot vqa with frozen llms: A tale of two approaches.
\newblock \emph{arXiv preprint arXiv:2403.11317}, 2024.

\bibitem[Sultani et~al.(2018)Sultani, Chen, and Shah]{sultani_real-world_2019}
Waqas Sultani, Chen Chen, and Mubarak Shah.
\newblock Real-world anomaly detection in surveillance videos.
\newblock In \emph{Proceedings of the IEEE Conference on Computer Vision and Pattern Recognition (CVPR)}, 2018.

\bibitem[Team(2024{\natexlab{a}})]{geminiteam2024gemini15unlockingmultimodal}
Gemini Team.
\newblock Gemini 1.5: Unlocking multimodal understanding across millions of tokens of context, 2024{\natexlab{a}}.

\bibitem[Team(2024{\natexlab{b}})]{openai_gpt-4o_2024}
OpenAI Team.
\newblock {GPT}-4o {System} {Card}, 2024{\natexlab{b}}.
\newblock arXiv:2410.21276.

\bibitem[Wang et~al.(2024{\natexlab{a}})Wang, Bai, Tan, Wang, Fan, Bai, Chen, Liu, Wang, Ge, et~al.]{wang_qwen2-vl_2024}
Peng Wang, Shuai Bai, Sinan Tan, Shijie Wang, Zhihao Fan, Jinze Bai, Keqin Chen, Xuejing Liu, Jialin Wang, Wenbin Ge, et~al.
\newblock Qwen2-vl: Enhancing vision-language model's perception of the world at any resolution.
\newblock \emph{arXiv preprint arXiv:2409.12191}, 2024{\natexlab{a}}.

\bibitem[Wang et~al.(2024{\natexlab{b}})Wang, Yang, Wu, and Zhang]{wang2024bayesianexampleselectionimproves}
Siyin Wang, Chao-Han~Huck Yang, Ji Wu, and Chao Zhang.
\newblock Bayesian example selection improves in-context learning for speech, text, and visual modalities.
\newblock \emph{arXiv preprint arXiv:2404.14716}, 2024{\natexlab{b}}.

\bibitem[Wang et~al.(2024{\natexlab{c}})Wang, Li, Li, Yu, He, Chen, Pei, Zheng, Xu, Wang, et~al.]{wang_internvideo2_2024}
Yi Wang, Kunchang Li, Xinhao Li, Jiashuo Yu, Yinan He, Guo Chen, Baoqi Pei, Rongkun Zheng, Jilan Xu, Zun Wang, et~al.
\newblock Internvideo2: Scaling video foundation models for multimodal video understanding.
\newblock \emph{arXiv preprint arXiv:2403.15377}, 2024{\natexlab{c}}.

\bibitem[Xiao et~al.(2021)Xiao, Shang, Yao, and Chua]{xiao_next-qanext_2021}
Junbin Xiao, Xindi Shang, Angela Yao, and Tat-Seng Chua.
\newblock Next-qa: Next phase of question-answering to explaining temporal actions.
\newblock In \emph{Proceedings of the IEEE/CVF Conference on Computer Vision and Pattern Recognition (CVPR)}, pages 9777--9786, 2021.

\bibitem[Xiong et~al.(2024)Xiong, Hu, Lu, LI, Fu, He, and Hooi]{xiong2024can}
Miao Xiong, Zhiyuan Hu, Xinyang Lu, YIFEI LI, Jie Fu, Junxian He, and Bryan Hooi.
\newblock Can {LLM}s express their uncertainty? an empirical evaluation of confidence elicitation in {LLM}s.
\newblock In \emph{The Twelfth International Conference on Learning Representations}, 2024.

\bibitem[Xu et~al.(2024)Xu, Wang, Zhang, Poon, and Chen]{xu_introspection_2024}
Nan Xu, Fei Wang, Sheng Zhang, Hoifung Poon, and Muhao Chen.
\newblock From introspection to best practices: Principled analysis of demonstrations in multimodal in-context learning.
\newblock \emph{arXiv preprint arXiv:2407.00902}, 2024.

\bibitem[Yang et~al.(2023)Yang, Li, Wang, and Wang]{yang2023improving}
Yuchen Yang, Houqiang Li, Yanfeng Wang, and Yu Wang.
\newblock Improving the reliability of large language models by leveraging uncertainty-aware in-context learning.
\newblock \emph{arXiv preprint arXiv:2310.04782}, 2023.

\bibitem[Yu et~al.(2024)Yu, Zhang, Hu, Storks, and Chai]{yu_eliciting_2024}
Keunwoo~Peter Yu, Zheyuan Zhang, Fengyuan Hu, Shane Storks, and Joyce Chai.
\newblock Eliciting in-context learning in vision-language models for videos through curated data distributional properties.
\newblock In \emph{Proceedings of the 2024 Conference on Empirical Methods in Natural Language Processing}, pages 20416--20431, Miami, Florida, USA, 2024. Association for Computational Linguistics.

\bibitem[Yuan et~al.(2023)Yuan, Chen, Cui, Gao, Zou, Cheng, Ji, Liu, and Sun]{yuan2023revisitingoutofdistributionrobustnessnlp}
Lifan Yuan, Yangyi Chen, Ganqu Cui, Hongcheng Gao, Fangyuan Zou, Xingyi Cheng, Heng Ji, Zhiyuan Liu, and Maosong Sun.
\newblock Revisiting out-of-distribution robustness in nlp: Benchmarks, analysis, and llms evaluations.
\newblock \emph{Advances in Neural Information Processing Systems}, 36:\penalty0 58478--58507, 2023.

\bibitem[Zhang et~al.(2024{\natexlab{a}})Zhang, Li, Chu, Hai, Xu, Yang, Guan, Xu, and Cui]{zhang2024outofdistributiongeneralizationmultimodallarge}
Xingxuan Zhang, Jiansheng Li, Wenjing Chu, Junjia Hai, Renzhe Xu, Yuqing Yang, Shikai Guan, Jiazheng Xu, and Peng Cui.
\newblock On the out-of-distribution generalization of multimodal large language models.
\newblock \emph{arXiv preprint arXiv:2402.06599}, 2024{\natexlab{a}}.

\bibitem[Zhang et~al.(2024{\natexlab{b}})Zhang, Wu, Li, Li, Ma, Liu, and Li]{zhang_video_2024}
Yuanhan Zhang, Jinming Wu, Wei Li, Bo Li, Zejun Ma, Ziwei Liu, and Chunyuan Li.
\newblock Video instruction tuning with synthetic data.
\newblock \emph{arXiv preprint arXiv:2410.02713}, 2024{\natexlab{b}}.

\bibitem[Zhao et~al.(2024)Zhao, Cai, Si, Ma, An, Chen, Liu, Wang, Han, and Chang]{zhao_mmicl_2024}
Haozhe Zhao, Zefan Cai, Shuzheng Si, Xiaojian Ma, Kaikai An, Liang Chen, Zixuan Liu, Sheng Wang, Wenjuan Han, and Baobao Chang.
\newblock {MMICL}: Empowering vision-language model with multi-modal in-context learning.
\newblock In \emph{The Twelfth International Conference on Learning Representations}, 2024.

\bibitem[Zhuang et~al.(2024)Zhuang, Singh, Liu, Shang, and Gao]{zhuang2024vectoriclincontextlearningcontinuous}
Yufan Zhuang, Chandan Singh, Liyuan Liu, Jingbo Shang, and Jianfeng Gao.
\newblock Vector-icl: In-context learning with continuous vector representations.
\newblock \emph{arXiv preprint arXiv:2410.05629}, 2024.

\end{thebibliography}
}
\clearpage
\appendix

\noindent\textbf{\large{Appendices}}
\section{Discussion on Hyperparameter Choice}
\label{sec:rationale}
\begin{table}[th]
    \newcommand{\markstar}{\makebox[0pt][l]{*}}
    \newcommand{\grey}[1]{\cellcolor[HTML]{eeeeee}{#1}}
    \centering
    \resizebox{0.95\linewidth}{!}{%
    \begin{tabular}{lcccc}
         \toprule
         & \makecell{Animal\\Kingdom} & PitVQA & UCF-Crime & \grey{Avg.} \\
         \midrule
         Zero-shot & 68.0 & 6.7 & 39.3 & \grey{38.0} \\
         \midrule
         $c_{th}$ = 0.1 & 69.4 & 53.6 & 50.3 & \grey{57.7} \\
         $c_{th}$ = 0.3 & 69.5 & 58.5 & 50.6 & \grey{59.5} \\
         $c_{th}$ = 0.5 & 70.7 & 61.3\markstar & 53.3\markstar & \grey{61.8} \\
         $c_{th}$ = 0.7 & 72.3\markstar & \textbf{61.6} & 52.7 & \grey{62.2} \\
         $c_{th}$ = 0.9 & \textbf{72.6} & 61.5 & \textbf{53.6} & \grey{\textbf{62.6}} \\
         \bottomrule
    \end{tabular}%
    }
    \caption{\textbf{Ablation on confidence threshold $c_{th}$.} The values used for the main table are marked with *.}
    \label{tab:ablation_confidence_threshold}
\end{table}

\noindent
We compare the results on varying confidence threshold $c_{th}$ in \cref{tab:ablation_confidence_threshold}. While the accuracy generally increases with $c_{th}=0.9$, it also increases the cost of the entire process by performing more iterations per query on average. Therefore, we choose $c_{th}=0.5$ and $0.7$ for the best trade-off between cost and accuracy.

\section{Details on Datasets}
\label{sec:appendix_B}
\paragraph{Animal Kingdom}
We use the Animal Kingdom dataset~\citep{ng_animal_2022} for our multiple choice question answering task. This dataset includes videos of animals with action labels such as \textit{Yawning} and \textit{Struggling}, covering 140 unique classes. While it was originally built for action recognition tasks, we modified its format to suit a multiple-choice QA task by pairing one true action label with four randomly chosen alternative labels. The dataset provides 24,004 labeled training examples and 6,096 test examples.
\paragraph{Sports-QA}
We employ the Sports-QA~\citep{li_sports-qa_2024} dataset for open-ended question answering task, which is designed for sports video question answering. This dataset includes various sports, such as basketball, football, and gymnastics, and features diverse question types like descriptions, timelines, causalities, and hypothetical scenarios. The dataset includes 56,385 training examples and 18,718 test examples.
\paragraph{PitVQA}
We also use PitVQA~\citep{he2024pitvqaimagegroundedtextembedding}, a dataset designed for VQA in endonasal pituitary surgery videos that requires specific medical knowledge, for the open-ended question answering task. PitVQA provides question-answer annotations at the frame level. For our experiments, we process a sequence of 10 consecutive frames as the video input, with question-answer pairs drawn from the middle, fifth frame. The dataset includes 75,010 training examples and 10,832 test examples.
\paragraph{UCF-Crime}
UCF-Crime~\citep{sultani_real-world_2019}, which classifies the type of crime in security camera footage into 13 categories, is used for video classification task. We include all crime categories in the prompt, guiding the model to select the appropriate crime class for the given video. The dataset also includes normal event videos as challenging negative examples. The official split of UCF-Crime provides four different train and test splits, with each split consisting of 532 training samples and 168 test samples. The result is reported as the average performance across the test sets of all four splits.
\paragraph{Drive\&Act}
The Drive\&Act dataset~\citep{martin_driveact_2019} is utilized for video classification tasks. This offers comprehensive labels for driver behaviors inside vehicles, including action segmentation information captured in Kinect-IR videos. We extract each segment from the video and ask the model to recognize the action. The official split of Drive\&Act provides three different train and test splits. Each split consists of around 2,000 labeled training examples and around 600 test examples. The result is reported as the average performance across the test sets of all three splits.
\paragraph{CapERA}
For the video captioning task, we evaluate models on the CapERA dataset~\citep{bashmal_capera_2023}, which is specifically curated for describing scenes captured from an aerial perspective. CapERA provides concise captions for a range of scenarios viewed from above, including concerts, harvesting, and car racing, and consists of 1,473 labeled examples for training and 1,391 for testing.

\section{Proof of Asymptotic Model Accuracy}
\label{sec:appendix_C}
\begin{proposition*}[Asymptotic Model Accuracy]
    Let $a(n)$ be the expected accuracy of \ours with a maximum of $n$ confidence-based iterations. Then,
    \begin{align*}
        \lim_{n \rightarrow \infty} a(n) = \frac{1}{1+\frac{\mathrm{FPR}}{\mathrm{TPR}} \cdot \frac{1-p_c}{p_c}},
    \end{align*}
    where $\mathrm{TPR}$ and $\mathrm{FPR}$ stand for the true positive rate (i.e., recall) and the false positive rate of the confidence estimation method, respectively.
\end{proposition*}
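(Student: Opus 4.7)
The plan is to set up a stopping-time calculation: under the stated independence assumption, each iteration is an i.i.d. trial with two binary outcomes, namely \textbf{correct/wrong} (with probability $p_c$ vs.\ $1-p_c$) and \textbf{confident/not-confident}. Writing $\mathrm{TPR}$ and $\mathrm{FPR}$ for $\Pr(\text{confident}\mid\text{correct})$ and $\Pr(\text{confident}\mid\text{wrong})$, I would introduce the joint probabilities $q_c := p_c\cdot\mathrm{TPR}$, $q_w := (1-p_c)\cdot\mathrm{FPR}$, and $q := q_c + q_w$, so that $(1-q)$ is the per-iteration probability of being deemed not-confident.

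Let $\tau$ denote the index of the iteration whose answer is returned. By the framework's rule, $\tau$ is the first $i \le n-1$ at which the confidence estimator fires, and otherwise $\tau = n$ (in which case the $n$-th answer is output \emph{regardless} of its confidence). The correctness probability decomposes as
\begin{align*}
a(n) \;=\; \sum_{i=1}^{n-1} (1-q)^{i-1} q_c \;+\; (1-q)^{n-1} p_c,
\end{align*}
where the first sum covers the case of an early high-confidence stop at iteration $i$, and the final term handles exhaustion of all $n$ iterations. This formula follows directly from the independence of iterations and the definition of $q_c$.

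Summing the geometric series gives $a(n) = q_c\cdot\frac{1-(1-q)^{n-1}}{q} + (1-q)^{n-1} p_c$. Provided $q > 0$ (which holds whenever the estimator has any positive firing rate), the factor $(1-q)^{n-1}$ vanishes as $n\to\infty$, so
\begin{align*}
\lim_{n\to\infty} a(n) \;=\; \frac{q_c}{q} \;=\; \frac{p_c\cdot\mathrm{TPR}}{p_c\cdot\mathrm{TPR} + (1-p_c)\cdot\mathrm{FPR}},
\end{align*}
which rearranges into the stated form by dividing numerator and denominator by $p_c\cdot\mathrm{TPR}$.

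Very little is genuinely hard here: the only delicate point is keeping the bookkeeping of the terminal iteration correct, since iteration $n$ is output unconditionally while iterations $1,\ldots,n-1$ are output only upon firing. I would state this carefully in the rigorous statement, note that the answer at iteration $n$ contributes its marginal correctness probability $p_c$ rather than $q_c$, and verify that this boundary term drops out in the limit. The edge case $q=0$ (a completely dead estimator) should also be flagged, since then $a(n) = p_c$ for all $n$ and the limit formula is vacuous; but whenever $\mathrm{TPR}>0$ or $\mathrm{FPR}>0$ the argument goes through unchanged.
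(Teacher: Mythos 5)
Your proposal is correct and follows essentially the same route as the paper: decompose each iteration into the three outcomes (correct-and-confident, wrong-but-confident, not-confident), sum the resulting geometric series, and let the survival term vanish in the limit. In fact your bookkeeping of the terminal iteration is more careful than the paper's --- the paper writes $a(n)=\sum_{i=1}^{n}c(i)+l(n-1)\cdot p_c$, which double-counts a confident correct answer at iteration $n$ (e.g.\ it gives $a(1)=p_c(1+\mathrm{TPR})$, possibly exceeding $1$), whereas your $a(n)=\sum_{i=1}^{n-1}(1-q)^{i-1}q_c+(1-q)^{n-1}p_c$ is consistent at $n=1$; the discrepancy lies in terms that vanish as $n\to\infty$, so both yield the stated limit, and your explicit flagging of the degenerate case $q=0$ is a welcome addition the paper omits.
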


\begin{proof}

    At each iteration, there are three possibilities: \begin{itemize}
        \item The model returns a correct response and is estimated to be confident, with probability $p_c \cdot \mathrm{TPR}$.
        \item The model returns an incorrect response, but is estimated to be confident, with a probability of $(1-p_c)\cdot \mathrm{FPR}$.
        \item The model returns a response, and is estimated to be unconfident, occurring with probability $p_u := 1 - (p_c \cdot \mathrm{TPR} + (1-p_c) \cdot \mathrm{FPR})$.
    \end{itemize}

    For the first two cases, the loop terminates and returns a response, whereas in the third case, the loop continues with a new iteration. Let $c(n)$ represent the probability that the loop ends by correctly returning a response (first case) on the $n$-th iteration, and $l(n)$ represent the probability that the loop is still ongoing (third case) after $n$ iterations.

    The expected accuracy $a(n)$ after $n$ iterations is the sum of the probabilities of ending with a correct response up to the $n$-th iteration, plus the probability of continuing after the $(n-1)$-th iteration, weighted by the probability of a correct response in the next iteration $p_c$:
    \begin{equation}
        a(n) = \sum_{i=1}^{n} c(i) + l(n-1) \cdot p_c.
        \label{eq:a(n)}
    \end{equation}
    
    The probability of continuing after the $n$-th iteration is $l(n) = l(n-1) \cdot p_u$, with $l(0) = 1$, leading to $l(n) = p_u^n$ by recursion. And the probability $c(n)$ of ending at the $n$-th iteration with a correct and confident response is:
    \begin{equation}
        c(n) = l(n-1) \cdot (p_c \cdot \mathrm{TPR}) = p_u^{n-1} \cdot (p_c \cdot \mathrm{TPR}),
        \label{eq:c(n)}
    \end{equation}
    where $p_c \cdot \mathrm{TPR}$ accounts for the likelihood that a response is classified as confident ($p_c$) and is also correct ($\mathrm{TPR}$).
    Therefore, we have:
    \begin{equation}
        a(n) = p_c \cdot \mathrm{TPR} \cdot \frac{1 - p_u^n}{1 - p_u} + p_u^{n-1} \cdot p_c.
    \end{equation}
    Since $0<p_u<1$, as $n \to \infty$, 
    \begin{align}
        \lim_{n \to \infty} a(n) & = \frac{p_c \cdot \mathrm{TPR}}{1 - p_u} \notag \\
        & = \frac{1}{1 + \frac{\mathrm{FPR}}{\mathrm{TPR}} \cdot \frac{1 - p_c}{p_c}}.
    \end{align}
    
\end{proof}

\section{Additional Results}


\subsection{Additional Discussion on Main Results}
\label{sec:additional_discussion}
For LoRA fine-tuning, we use a rank of 32 and train the model for 1 epoch on Animal Kingdom and PitVQA, 5 epochs on UCF-Crime, and 2 epochs on CapERA. In \cref{tab:main}, \ours outperforms the LoRA fine-tuned model on all datasets except CapERA, showing that in-context examples are more effective than training in OOD video QA when domain knowledge requires extensive data and training.

Interestingly, the LLaVA-Video-72B model underperforms compared to LLaVA-Video-7B model notably in video classification and captioning. For captioning, this is because 72B model often generates excessively long outputs filled with irrelevant details. In video classification, we suspect the limited capacity of 7B model may act as a form of regularization, helping it generalize better on OOD data, but this needs further investigation.

In addition, \ours outperforms \textsc{SimRankVote} in \cref{tab:main}, highlighting the benefits of using confidence-based aggregation instead of majority voting. \ours also achieves better results than \textsc{SimRankOnce}, showing that using more examples leads to better performance. 
Lastly, \textsc{SimRankVote} outperforms \textsc{RandExVote}, demonstrating the effectiveness of selecting similar examples based on video and text features.

\subsection{Additional Qualitative Results}
\label{sec:additional_qualitative}
In the following pages, we present qualitative results of \ours for each dataset. For each iteration, we use two examples with maximum of 4 iterations, and the outputs of the model are presented together with confidence scores.

\section{Limitation}
While \ours delivers remarkable performance, it does have some limitations. First, \ours requires more time compared to single-step in-context learning because it performs multiple rounds of inference. This additional computation may make it less suitable for applications that demand low latency, such as real-time video analysis. However, \ours mitigates this issue by using early termination when the model confidence in its output is sufficiently high, which significantly reduces computation time. It is also much faster than training a model from scratch.

Second, \ours relies on having an example pool to select reference examples from. We demonstrate its effectiveness on the UCF-Crime dataset, which contains only 532 training samples, showing that \ours can perform well even with a relatively small example pool. However, we have not tested its performance with extremely small datasets. Considering the challenges of generating out-of-distribution video data, exploring the effectiveness of \ours with very limited examples is an important direction for future research.

\vspace*{\fill} 
\begin{center}
\begin{figure*}[h]
\resizebox{\linewidth}{!}{
\begin{tcolorbox}[
    enhanced,                 
    colframe=black,           
    colback=white,            
    boxrule=1.5pt,            
    width=\textwidth,         
    before skip=1em,          
    after skip=1em,           
    overlay={%
        \node[anchor=north west, fill=black, font=\large, text=white, inner sep=2mm, 
        xshift=4mm, yshift=4mm, rounded corners=1mm] 
        at (frame.north west) {Multiple Choice QA: Animal Kingdom};
    }
]
\vspace{1em} 

\leftline{\textbf{$\blacktriangleright$ Iteration 1}}
\vspace{1em}
\renewcommand{\arraystretch}{1.5}
\begin{tabular}{m{0.4\textwidth}m{9cm}}
\includegraphics[width=0.4\textwidth,height=1.7cm]{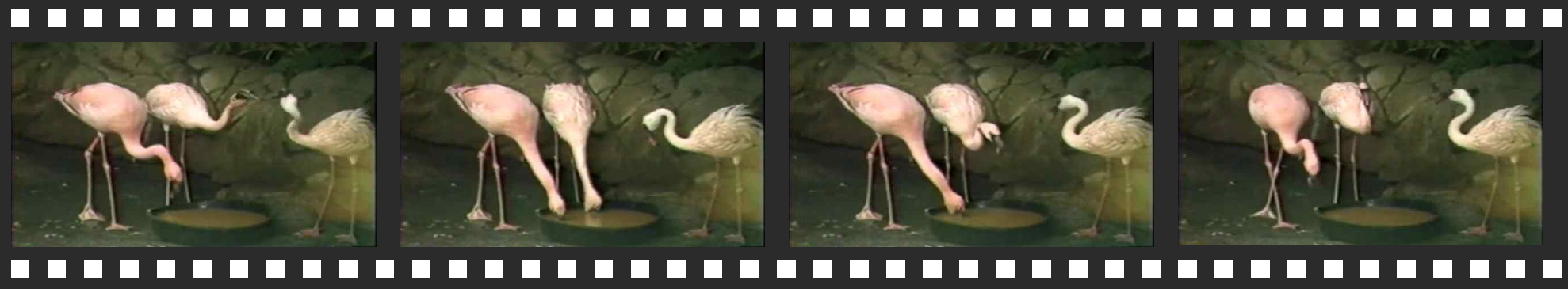} &
\textbf{Example 1:} What action is the animal doing in the video? Answer with the option's letter from the given choices directly. Options: (A) Dancing On Water (B) Urinating (C) Eating (D) Sleeping in its nest (E) Sharing Food \newline
The answer is (C) \\
\midrule
\includegraphics[width=0.4\textwidth,height=1.7cm]{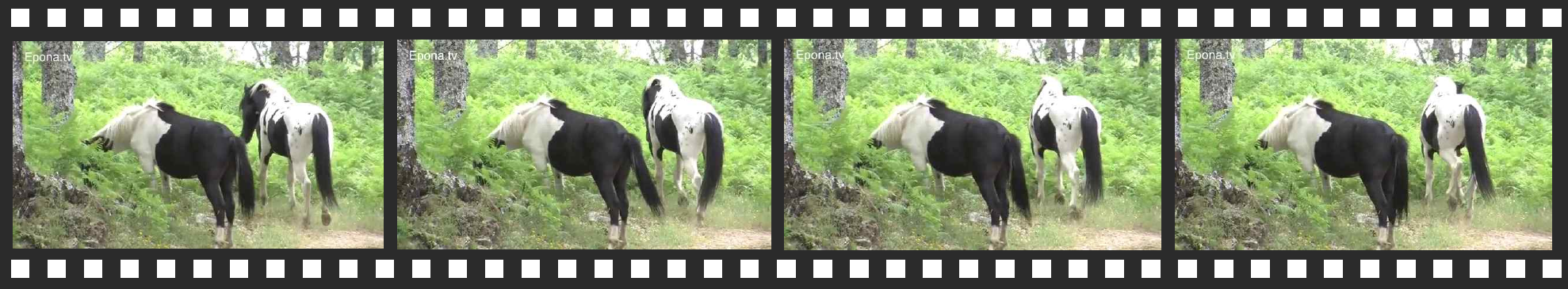} &
\textbf{Example 2:} What action is the animal doing in the video? Answer with the option's letter from the given choices directly. Options: (A) Biting (B) Climbing (C) Dancing (D) Drinking (E) Being Dragged \newline
The answer is (D) \\
\midrule
\includegraphics[width=0.4\textwidth,height=1.7cm]{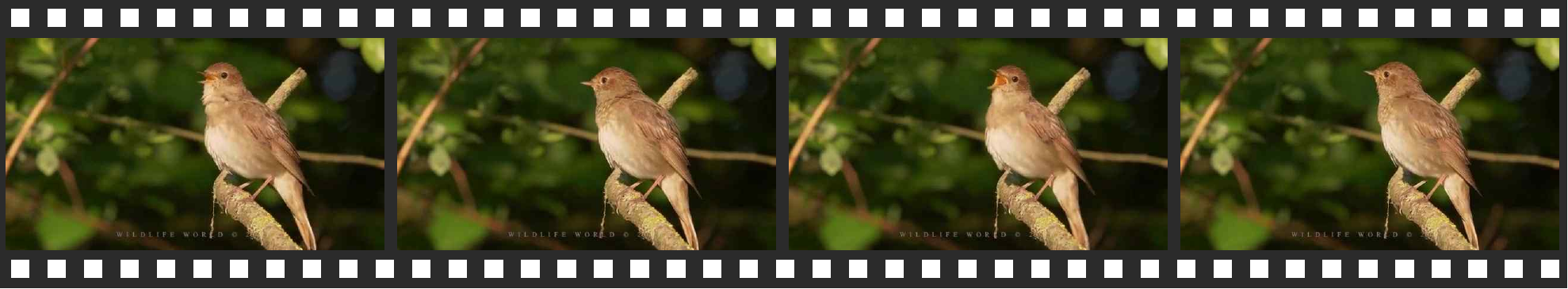} &
\textbf{User:} What action is the animal doing in the video? Answer with the option's letter from the given choices directly.  Options: (A) Trapped (B) Attending (C) Walking On Water (D) Sharing Food (E) Showing Affection \newline
\textbf{LLaVA-Video:} The answer is (D) \textcolor{Red}{(Wrong, Confidence 0.409)}\\
\end{tabular}

\leftline{\textbf{$\blacktriangleright$ Iteration 2}}
\vspace{1em}
\begin{tabular}{m{0.4\textwidth}m{9cm}}
\includegraphics[width=0.4\textwidth,height=1.7cm]{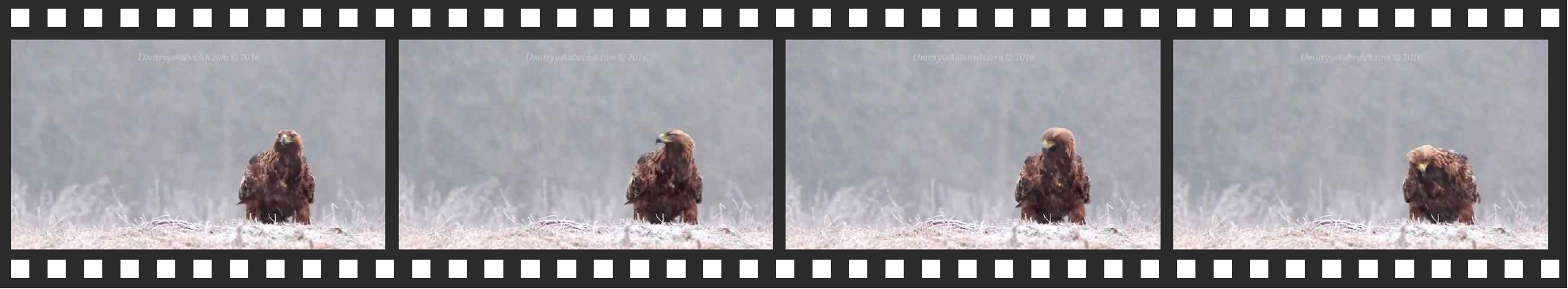} &
\textbf{Example 3:} What action is the animal doing in the video? Answer with the option's letter from the given choices directly. Options: (A) Chasing (B) Rubbing its head (C) Moving (D) Dying (E) Doing somersault \newline
The answer is (C) \\
\midrule
\includegraphics[width=0.4\textwidth,height=1.7cm]{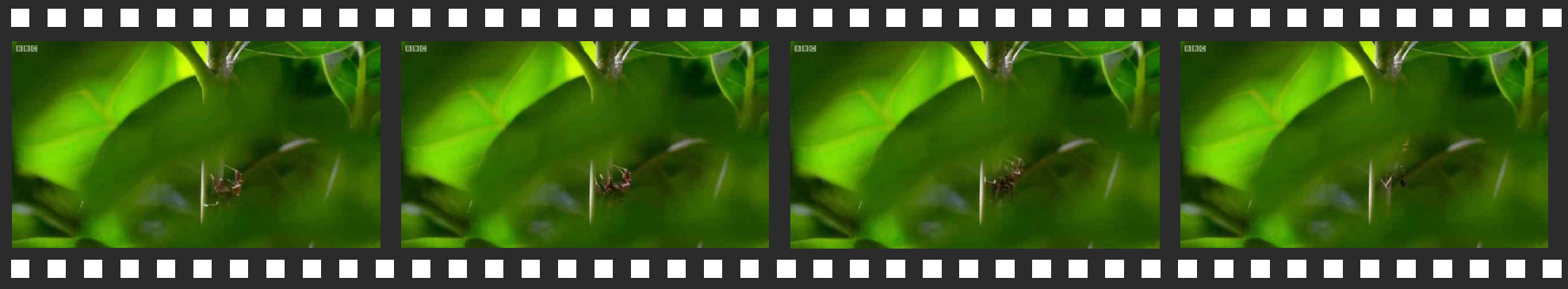} &
\textbf{Example 4:} What action is the animal doing in the video? Answer with the option's letter from the given choices directly. Options: (A) Puffing its throat (B) Hatching (C) Doing A Back Kick (D) Attending (E) Lying Down \newline
The answer is (D) \\
\midrule
\includegraphics[width=0.4\textwidth,height=1.7cm]{qualitative_results/animal/animal_test.pdf} &
\textbf{User:} What action is the animal doing in the video? Answer with the option's letter from the given choices directly.  Options: (A) Trapped (B) Attending (C) Walking On Water (D) Sharing Food (E) Showing Affection \newline
\textbf{LLaVA-Video:} The answer is (B) \textcolor{ForestGreen}{(Correct, Confidence 0.886)}\\
\end{tabular}

\vspace{1mm}
\end{tcolorbox}
}
\caption{Qualitative result on the Animal Kingdom dataset.}
\end{figure*}
\end{center}
\vspace*{\fill} 

\vspace*{\fill} 
\begin{center}
\begin{figure*}[h]
\resizebox{\linewidth}{!}{
\begin{tcolorbox}[
    enhanced,                 
    colframe=black,           
    colback=white,            
    boxrule=1.5pt,            
    width=\textwidth,         
    before skip=1em,          
    after skip=1em,           
    overlay={%
        \node[anchor=north west, fill=black, font=\large, text=white, inner sep=2mm, 
        xshift=4mm, yshift=4mm, rounded corners=1mm] 
        at (frame.north west) {Open-ended QA: Sports-QA};
    }
]
\vspace{1em} 

\leftline{\textbf{$\blacktriangleright$ Iteration 1}}
\vspace{2mm}
\begin{tabular}{m{0.4\textwidth}m{9cm}}
\includegraphics[width=0.4\textwidth]{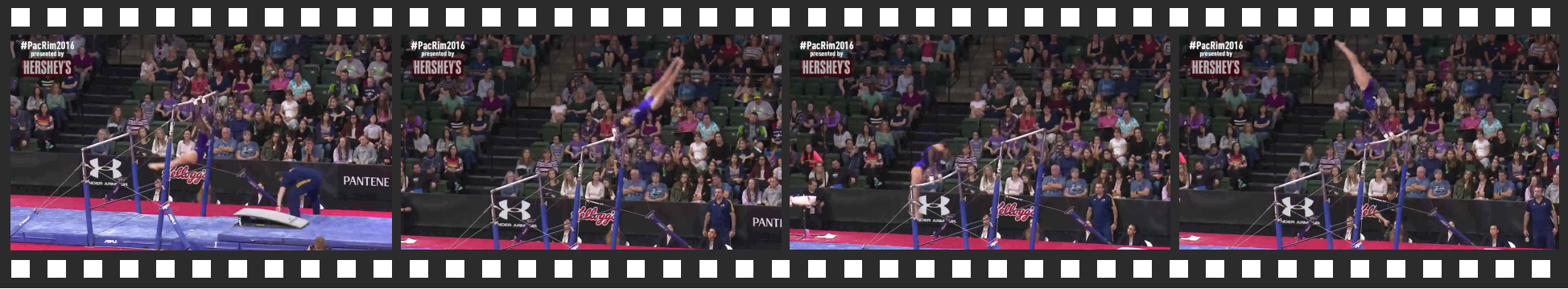} &
\textbf{Example 1:} What do the players perform after performing transition flight from low bar to high bar? \newline
Giant circle backward with 1 turn to handstand \\
\midrule
\includegraphics[width=0.4\textwidth]{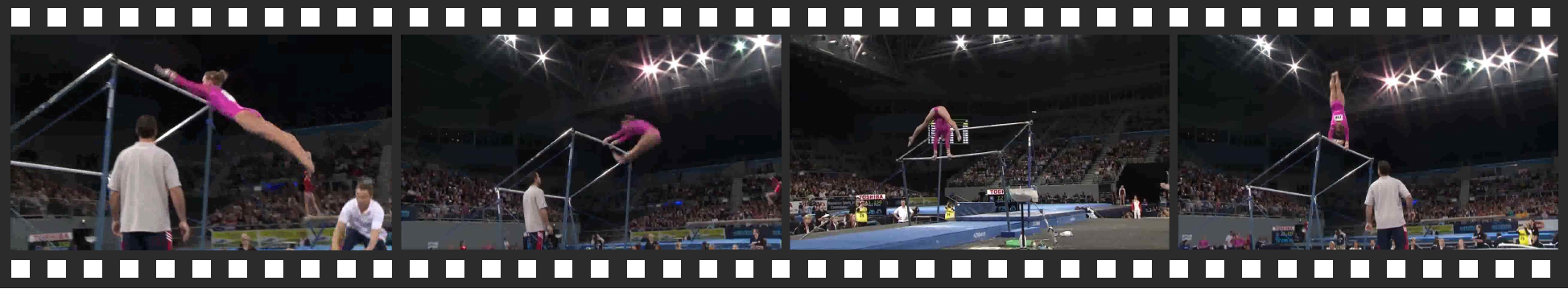} &
\textbf{Example 2:} What do the players perform after performing transition flight from low bar to high bar? \newline
Transition flight from high bar to low bar \\
\midrule
\includegraphics[width=0.4\textwidth]{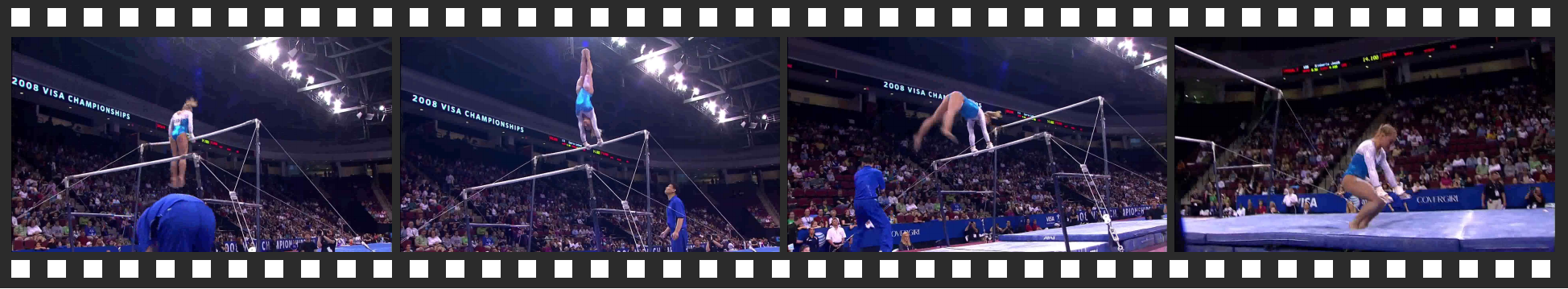} &
\textbf{User:} What do the players perform after performing transition flight from low bar to high bar? \newline
\textbf{LLaVA-Video:} Giant circle backward with 1 turn to handstand \textcolor{Red}{(Correct, Confidence 0.409)}\\
\end{tabular}

\leftline{\textbf{$\blacktriangleright$ Iteration 2}}
\vspace{2mm}
\begin{tabular}{m{0.4\textwidth}m{9cm}}
\includegraphics[width=0.4\textwidth]{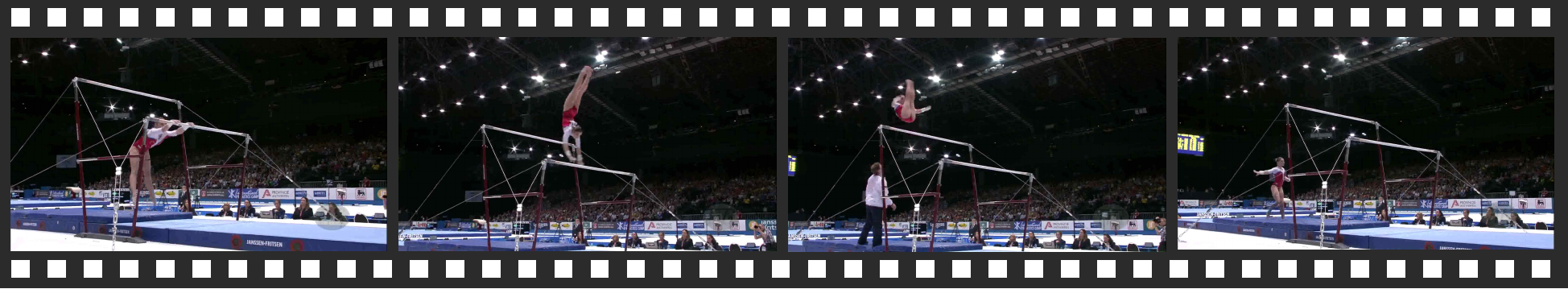} &
\textbf{Example 3:} What do the players perform after performing transition flight from low bar to high bar? \newline
Transition flight from high bar to low bar \\
\midrule
\includegraphics[width=0.4\textwidth]{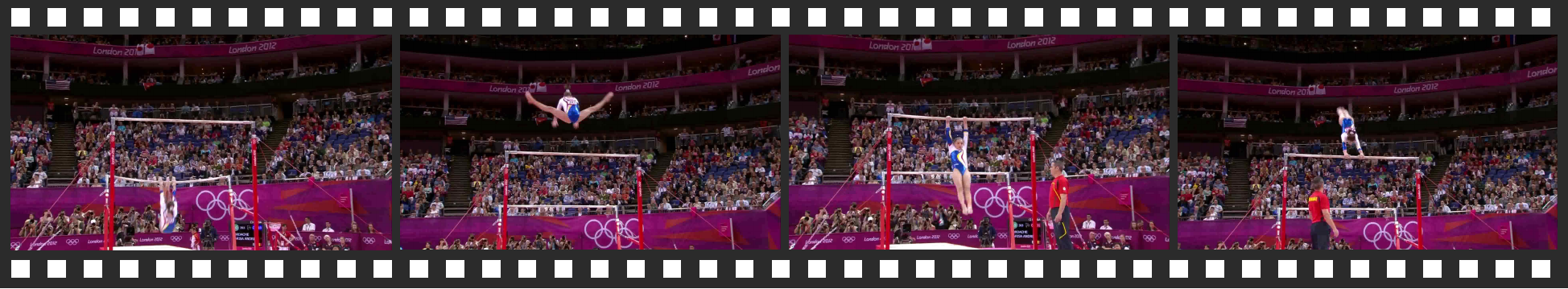} &
\textbf{Example 4:} What do the players perform after performing transition flight from low bar to high bar? \newline
Giant circle backward with 0.5 turn to handstand \\
\midrule
\includegraphics[width=0.4\textwidth]{qualitative_results/sportsqa/sportsqa_test.pdf} &
\textbf{User:} What do the players perform after performing transition flight from low bar to high bar? \newline
\textbf{LLaVA-Video:} Giant circle backward with 0.5 turn to handstand \textcolor{Red}{(Wrong, Confidence 0.309)}\\
\end{tabular}

\leftline{\textbf{$\blacktriangleright$ Iteration 3}}
\vspace{2mm}
\begin{tabular}{m{0.4\textwidth}m{9cm}}
\includegraphics[width=0.4\textwidth]{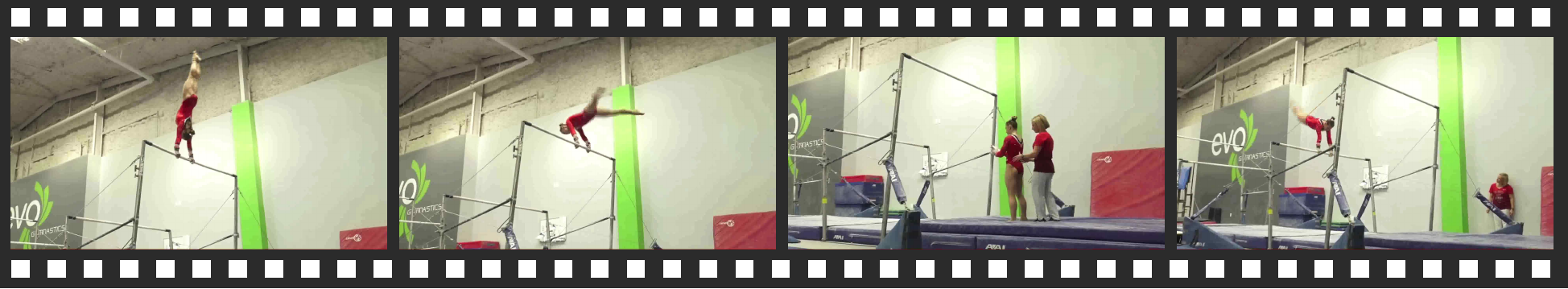} &
\textbf{Example 5:} What do the players perform after performing transition flight from low bar to high bar? \newline
Giant circle backward \\
\midrule
\includegraphics[width=0.4\textwidth]{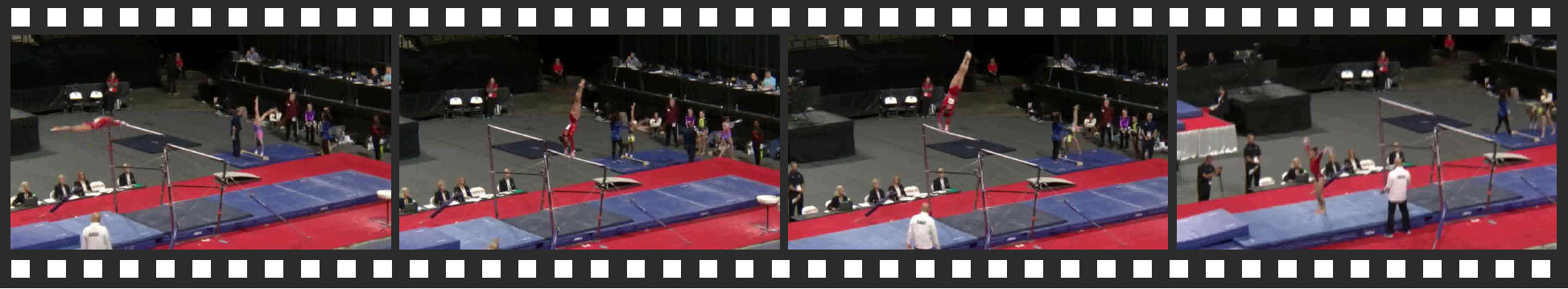} &
\textbf{Example 6:} What do the players perform after performing transition flight from low bar to high bar? \newline
Pike sole circle backward with 0.5 turn to handstand \\
\midrule
\includegraphics[width=0.4\textwidth]{qualitative_results/sportsqa/sportsqa_test.pdf} &
\textbf{User:} What do the players perform after performing transition flight from low bar to high bar? \newline
\textbf{LLaVA-Video:} Pike sole circle backward with 0.5 turn to handstand \textcolor{Red}{(Wrong, Confidence 0.303)}\\
\end{tabular}

\leftline{\textbf{$\blacktriangleright$ Iteration 4}}
\vspace{2mm}
\begin{tabular}{m{0.4\textwidth}m{9cm}}
\includegraphics[width=0.4\textwidth]{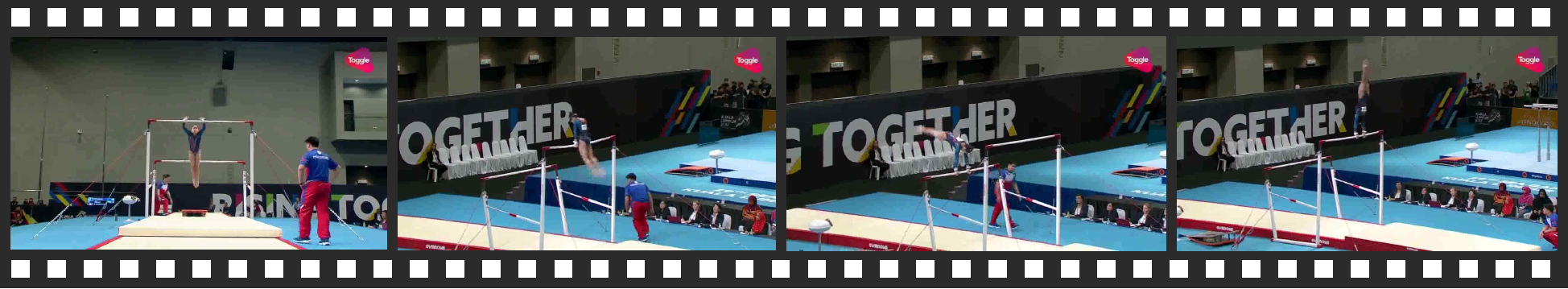} &
\textbf{Example 7:} What do the players perform after performing transition flight from low bar to high bar? \newline
Giant circle backward with 1 turn to handstand \\
\midrule
\includegraphics[width=0.4\textwidth]{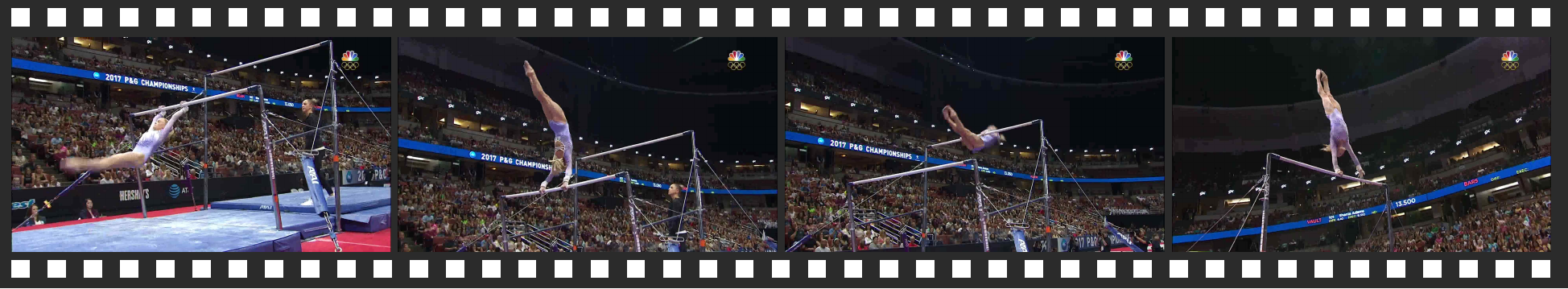} &
\textbf{Example 8:} What do the players perform after performing transition flight from low bar to high bar? \newline
Giant circle backward with 0.5 turn to handstand \\
\midrule
\includegraphics[width=0.4\textwidth]{qualitative_results/sportsqa/sportsqa_test.pdf} &
\textbf{User:} What do the players perform after performing transition flight from low bar to high bar? \newline
\textbf{LLaVA-Video:} Giant circle backward with 1 turn to handstand \textcolor{ForestGreen}{(Correct, Confidence 0.586)}\\
\end{tabular}
\vspace{1mm}
\end{tcolorbox}
}
\caption{Qualitative result on the Sports-QA dataset.}
\end{figure*}
\end{center}
\vspace*{\fill} 

\vspace*{\fill} 
\begin{center}
\begin{figure*}[h]
\resizebox{\linewidth}{!}{
\begin{tcolorbox}[
    enhanced,                 
    colframe=black,           
    colback=white,            
    boxrule=1.5pt,            
    width=\textwidth,         
    before skip=1em,          
    after skip=1em,           
    overlay={%
        \node[anchor=north west, fill=black, font=\large, text=white, inner sep=2mm, 
        xshift=4mm, yshift=4mm, rounded corners=1mm] 
        at (frame.north west) {Open-ended QA: PitVQA};
    }
]
\vspace{1em} 

\leftline{\textbf{$\blacktriangleright$ Iteration 1}}
\vspace{2mm}
\begin{tabular}{m{0.4\textwidth}m{9cm}}
\includegraphics[width=0.4\textwidth,height=1.7cm]{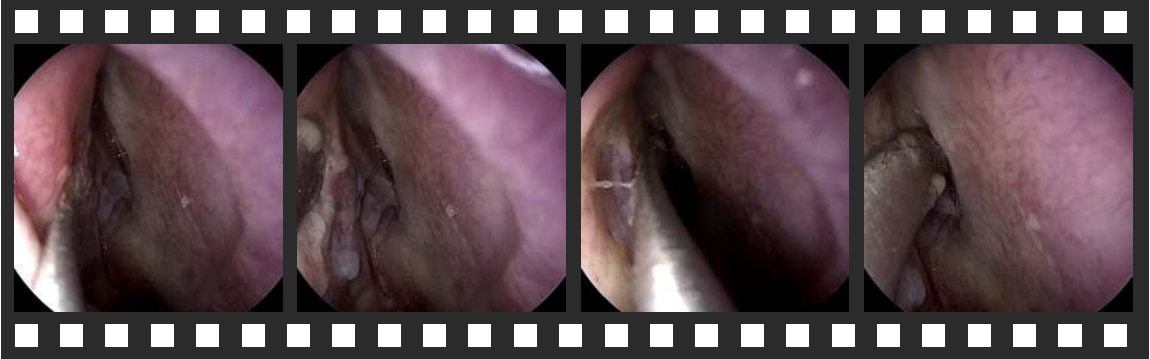} &
\textbf{Example 1:} Where is the surgical instrument freer elevator tip located in the middle of the video? \newline
Top-left \\
\midrule
\includegraphics[width=0.4\textwidth,height=1.7cm]{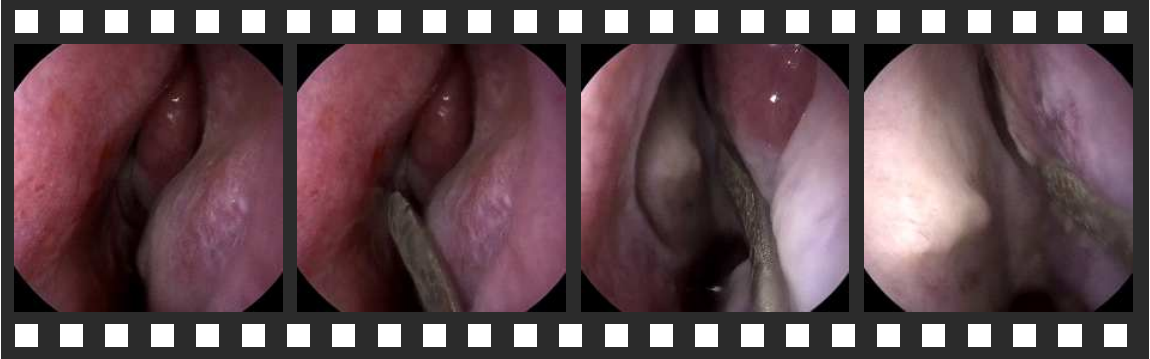} &
\textbf{Example 2:} Where is the surgical instrument freer elevator tip located in the middle of the video? \newline
Centre \\
\midrule
\includegraphics[width=0.4\textwidth,height=1.7cm]{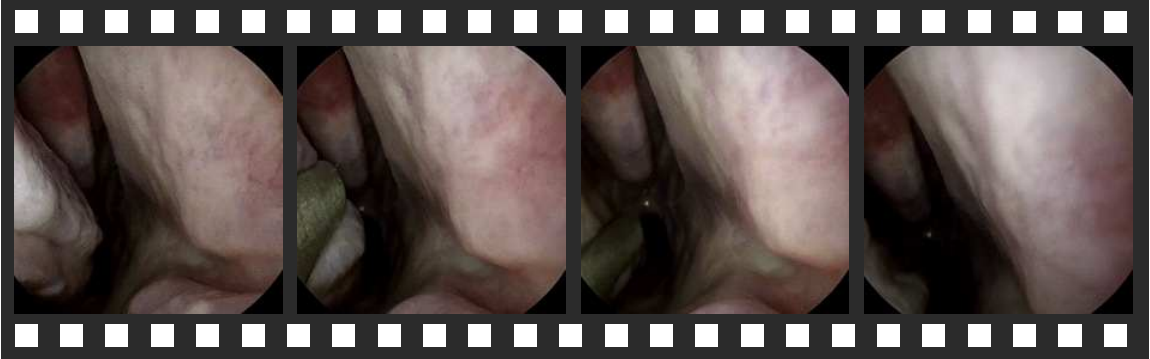} &
\textbf{User:} Where is the surgical instrument freer elevator tip located in the middle of the video? \newline
\textbf{LLaVA-Video:} Bottom-right \textcolor{Red}{(Wrong, Confidence 0.167)}\\
\vspace{1mm}
\end{tabular}

\leftline{\textbf{$\blacktriangleright$ Iteration 2}}
\vspace{2mm}
\begin{tabular}{m{0.4\textwidth}m{9cm}}
\includegraphics[width=0.4\textwidth,height=1.7cm]{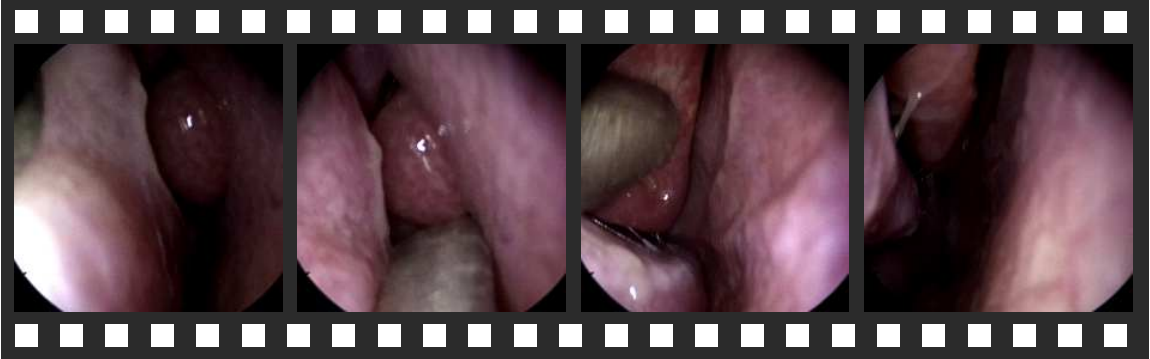} &
\textbf{Example 3:} Where is the surgical instrument freer elevator tip located in the middle of the video? \newline
Top-left \\
\midrule
\includegraphics[width=0.4\textwidth,height=1.7cm]{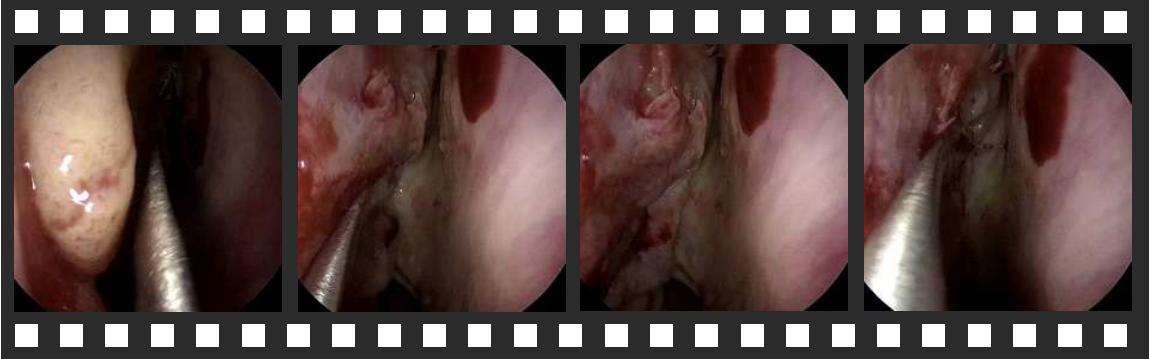} &
\textbf{Example 4:} Where is the surgical instrument freer elevator tip located in the middle of the video? \newline
Centre \\
\midrule
\includegraphics[width=0.4\textwidth,height=1.7cm]{qualitative_results/pitvqa/pitvqa_test.pdf} &
\textbf{User:} Where is the surgical instrument freer elevator tip located in the middle of the video? \newline
\textbf{LLaVA-Video:} Top \textcolor{Red}{(Wrong, Confidence 0.204)}\\
\vspace{1mm}
\end{tabular}

\leftline{\textbf{$\blacktriangleright$ Iteration 3}}
\vspace{2mm}
\begin{tabular}{m{0.4\textwidth}m{9cm}}
\includegraphics[width=0.4\textwidth,height=1.7cm]{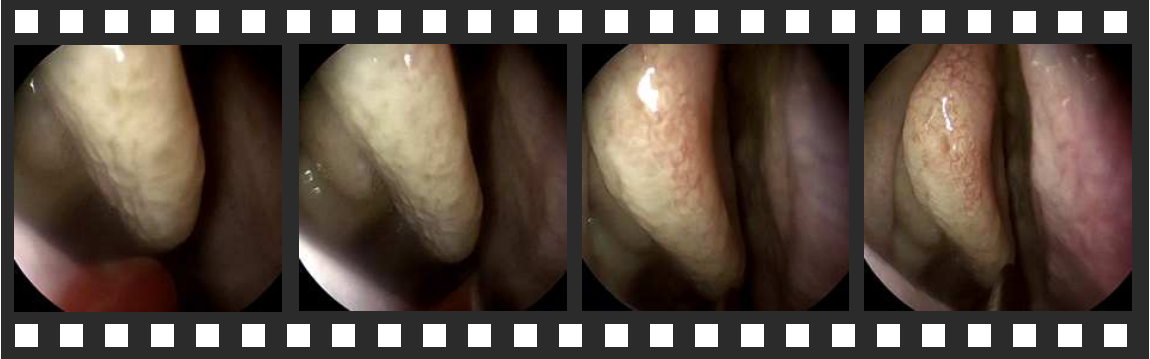} &
\textbf{Example 5:} Where is the surgical instrument freer elevator tip located in the middle of the video? \newline
Bottom-left \\
\midrule
\includegraphics[width=0.4\textwidth,height=1.7cm]{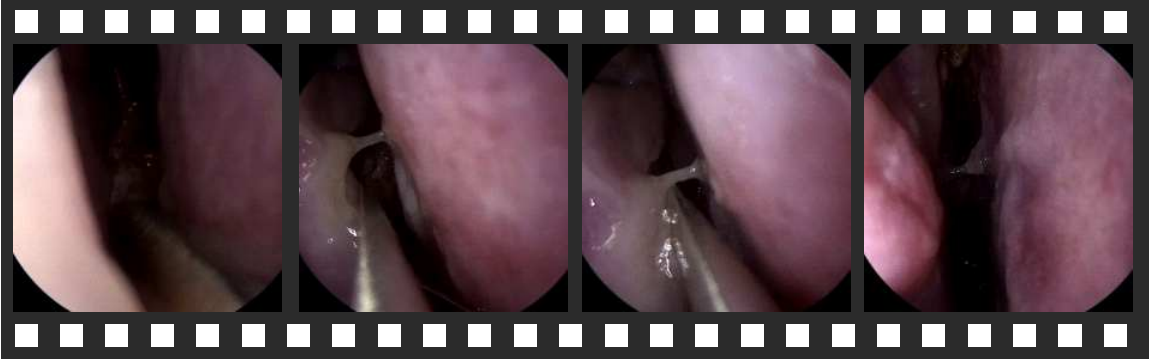} &
\textbf{Example 6:} Where is the surgical instrument freer elevator tip located in the middle of the video? \newline
Bottom-left \\
\midrule
\includegraphics[width=0.4\textwidth,height=1.7cm]{qualitative_results/pitvqa/pitvqa_test.pdf} &
\textbf{User:} Where is the surgical instrument freer elevator tip located in the middle of the video? \newline
\textbf{LLaVA-Video:} Bottom-left \textcolor{ForestGreen}{(Correct, Confidence 0.809)}\\
\end{tabular}

\vspace{1mm}
\end{tcolorbox}
}
\caption{Qualitative result on the PitVQA dataset.}
\end{figure*}
\end{center}
\vspace*{\fill} 

\vspace*{\fill} 
\begin{center}
\begin{figure*}[h]
\resizebox{\linewidth}{!}{
\begin{tcolorbox}[
    enhanced,                 
    colframe=black,           
    colback=white,            
    boxrule=1.5pt,            
    width=\textwidth,         
    before skip=1em,          
    after skip=1em,           
    overlay={%
        \node[anchor=north west, fill=black, font=\large, text=white, inner sep=2mm, 
        xshift=4mm, yshift=4mm, rounded corners=1mm] 
        at (frame.north west) {Video Classification: UCF-Crime};
    }
]
\vspace{1em} 

\leftline{\textbf{$\blacktriangleright$ Iteration 1}}
\vspace{1em}
\begin{tabular}{m{0.4\textwidth}m{9cm}}
\includegraphics[width=0.4\textwidth,height=1.7cm]{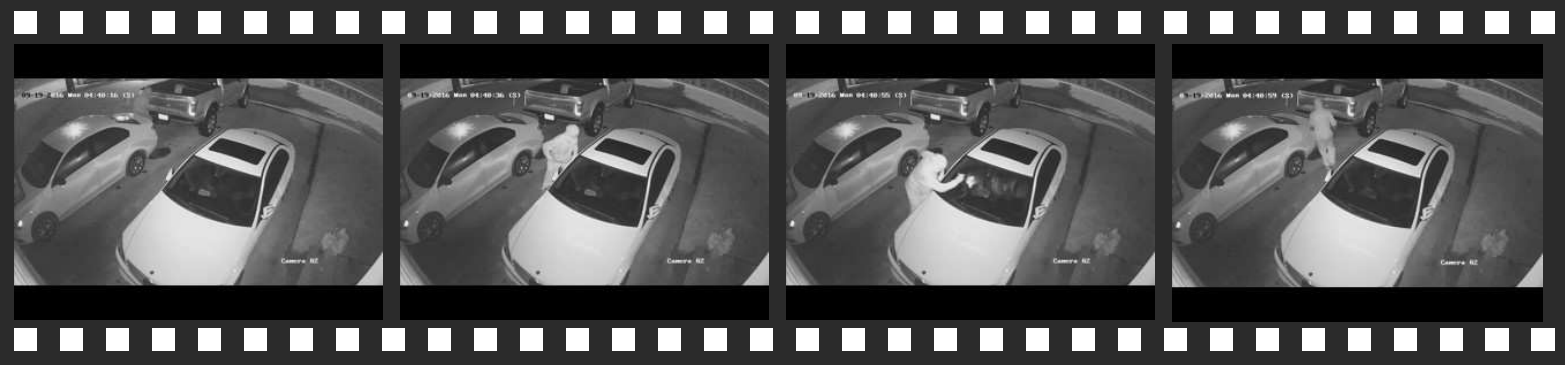} &
\textbf{Example 1:} Classify the following video into one of the following categories: \{14 categories\} \newline
Stealing \\
\midrule
\includegraphics[width=0.4\textwidth,height=1.7cm]{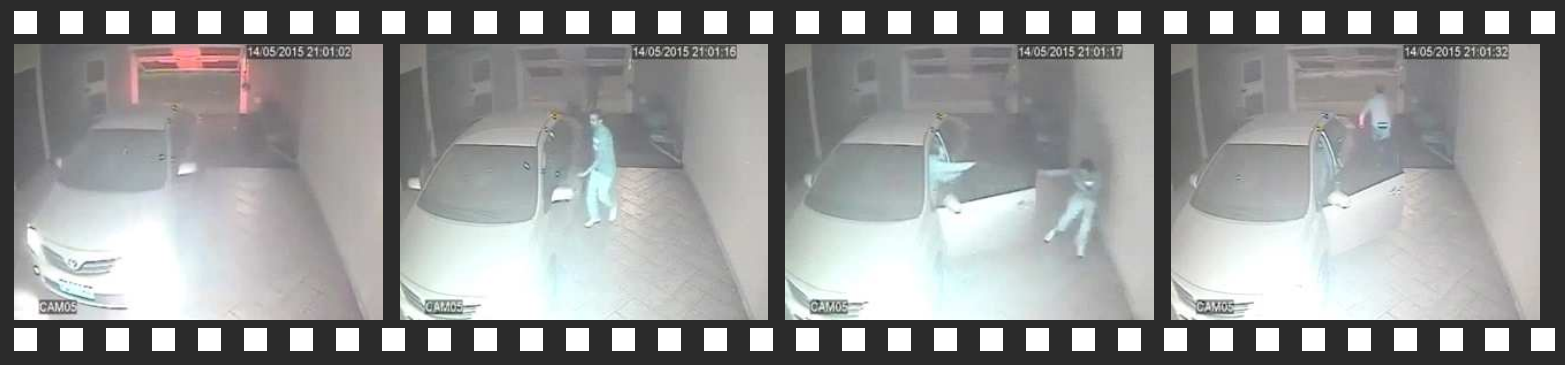} &
\textbf{Example 2:} Classify the following video into one of the following categories: \{14 categories\} \newline
Shooting \\
\midrule
\includegraphics[width=0.4\textwidth,height=1.7cm]{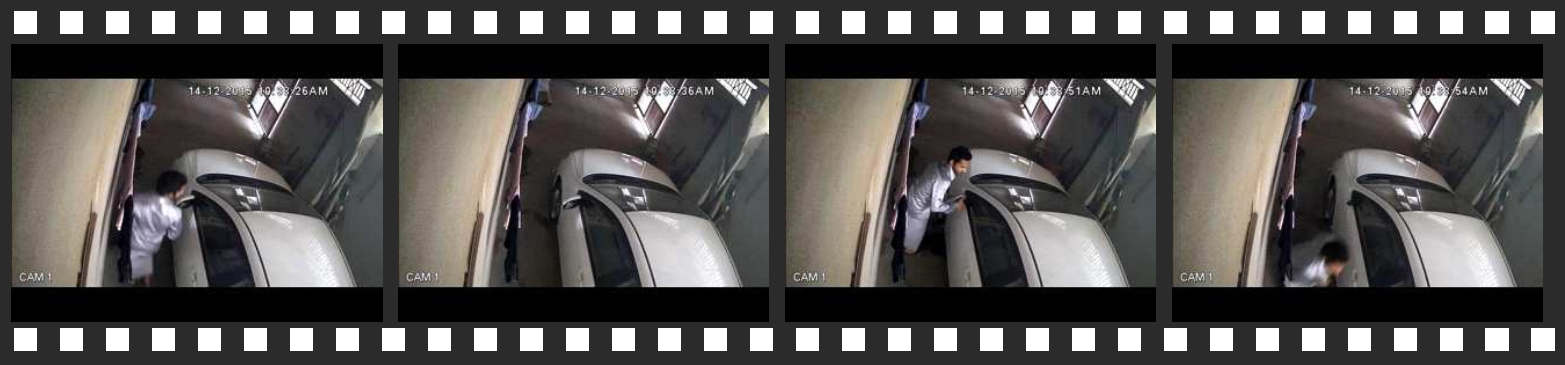} &
\textbf{User:} Classify the following video into one of the following categories: \{14 categories\} \newline
\textbf{LLaVA-Video:} Stealing \textcolor{Red}{(Correct, Confidence 0.484)}\\
\vspace{1mm}
\end{tabular}

\leftline{\textbf{$\blacktriangleright$ Iteration 2}}
\vspace{1em}
\begin{tabular}{m{0.4\textwidth}m{9cm}}
\includegraphics[width=0.4\textwidth,height=1.7cm]{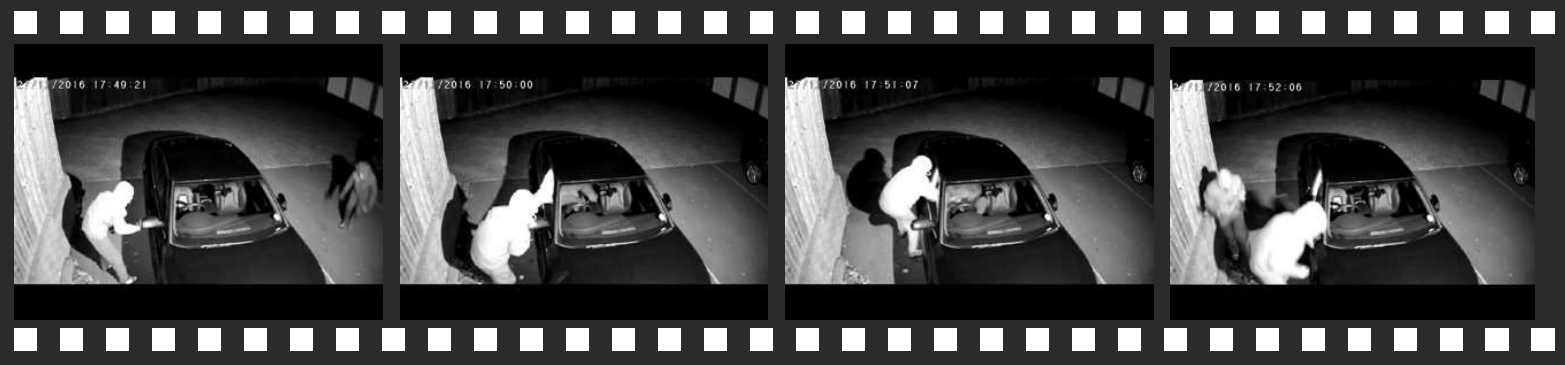} &
\textbf{Example 3:} Classify the following video into one of the following categories: \{14 categories\} \newline
Stealing \\
\midrule
\includegraphics[width=0.4\textwidth,height=1.7cm]{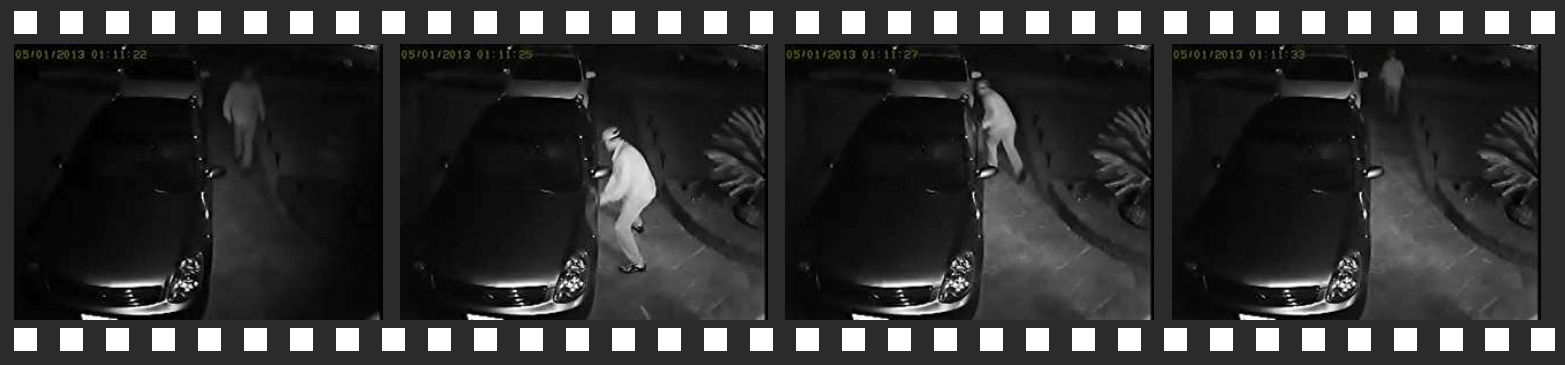} &
\textbf{Example 4:} Classify the following video into one of the following categories: \{14 categories\} \newline
Vandalism \\
\midrule
\includegraphics[width=0.4\textwidth,height=1.7cm]{qualitative_results/crime/crime_test.pdf} &
\textbf{User:} Classify the following video into one of the following categories: \{14 categories\} \newline
\textbf{LLaVA-Video:} Stealing \textcolor{ForestGreen}{(Correct, Confidence 0.591)}\\
\end{tabular}

\vspace{1mm}
\end{tcolorbox}
}
\caption{Qualitative result on the UCF-Crime dataset.}
\end{figure*}
\end{center}
\vspace*{\fill} 

\vspace*{\fill} 
\begin{center}
\begin{figure*}[h]
\resizebox{\linewidth}{!}{
\begin{tcolorbox}[
    enhanced,                 
    colframe=black,           
    colback=white,            
    boxrule=1.5pt,            
    width=\textwidth,         
    before skip=1em,          
    after skip=1em,           
    overlay={%
        \node[anchor=north west, fill=black, font=\large, text=white, inner sep=2mm, 
        xshift=4mm, yshift=4mm, rounded corners=1mm] 
        at (frame.north west) {Video Classification: Drive\&Act};
    }
]
\vspace{1em} 

\leftline{\textbf{$\blacktriangleright$ Iteration 1}}
\vspace{1em}
\begin{tabular}{m{0.4\textwidth}m{9cm}}
\includegraphics[width=0.4\textwidth,height=1.7cm]{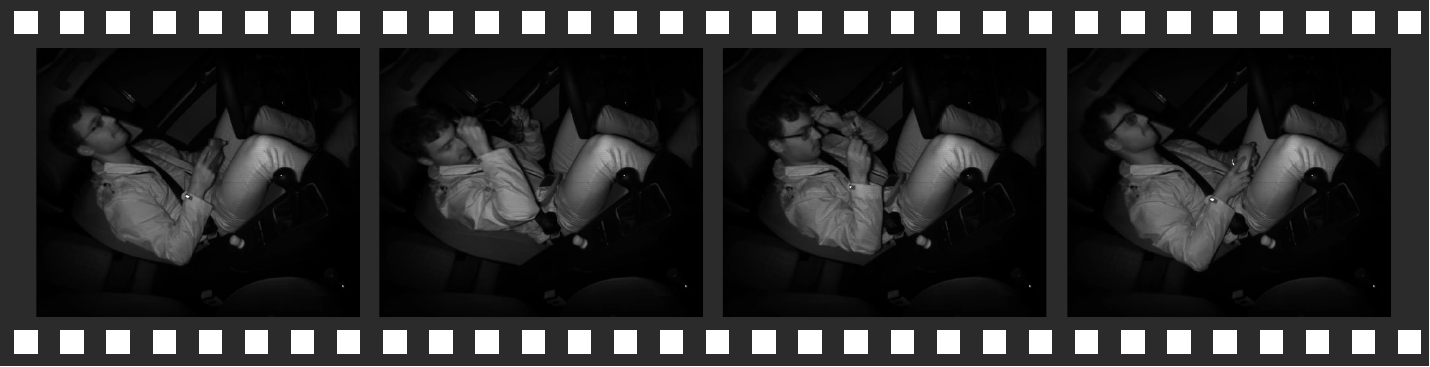} &
\textbf{Example 1:} Classify the following video into one of the following categories: \{34 categories\} \newline
Putting on sunglasses \\
\midrule
\includegraphics[width=0.4\textwidth,height=1.7cm]{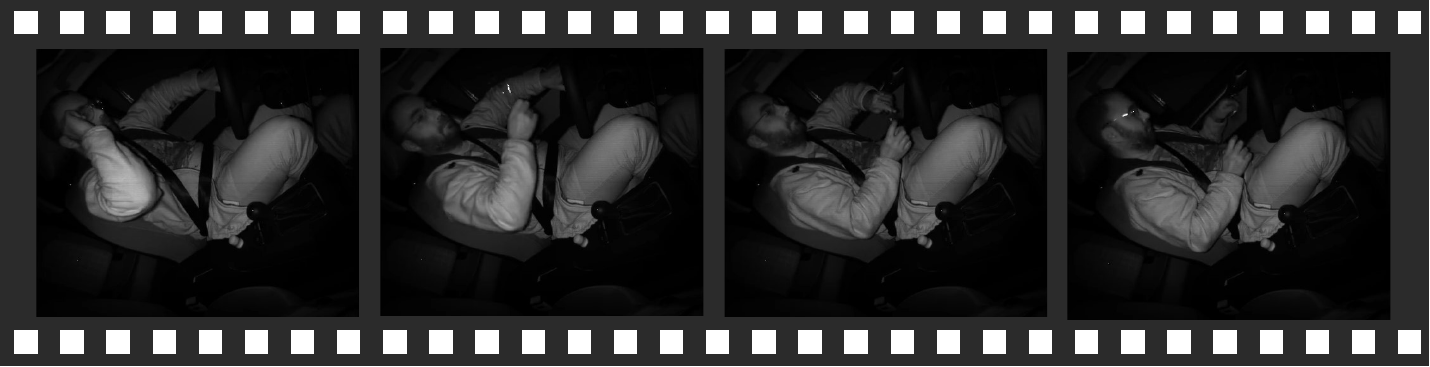} &
\textbf{Example 2:} Classify the following video into one of the following categories: \{34 categories\} \newline
Taking off sunglasses \\
\midrule
\includegraphics[width=0.4\textwidth,height=1.7cm]{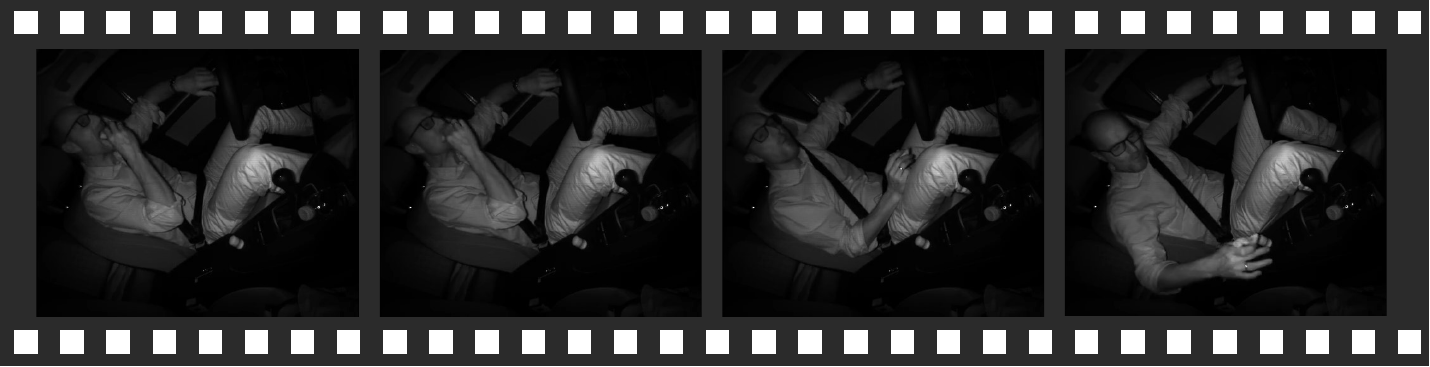} &
\textbf{User:} Classify the following video into one of the following categories: \{34 categories\} \newline
\textbf{LLaVA-Video:} Taking off sunglasses \newline \textcolor{Red}{(Wrong, Confidence 0.395)}\\
\vspace{1mm}
\end{tabular}

\leftline{\textbf{$\blacktriangleright$ Iteration 2}}
\vspace{1em}
\begin{tabular}{m{0.4\textwidth}m{9cm}}
\includegraphics[width=0.4\textwidth,height=1.7cm]{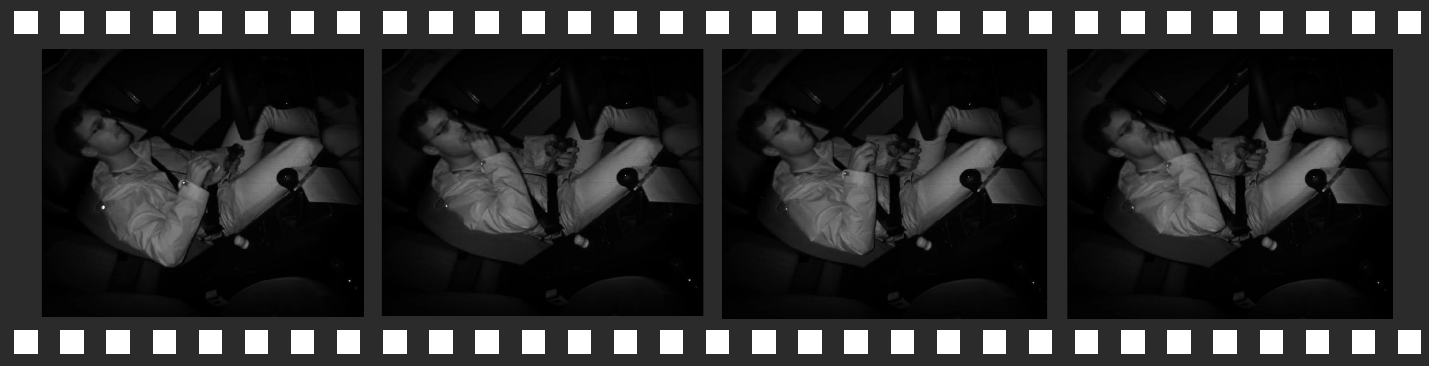} &
\textbf{Example 3:} Classify the following video into one of the following categories: \{34 categories\} \newline
Eating \\
\midrule
\includegraphics[width=0.4\textwidth,height=1.7cm]{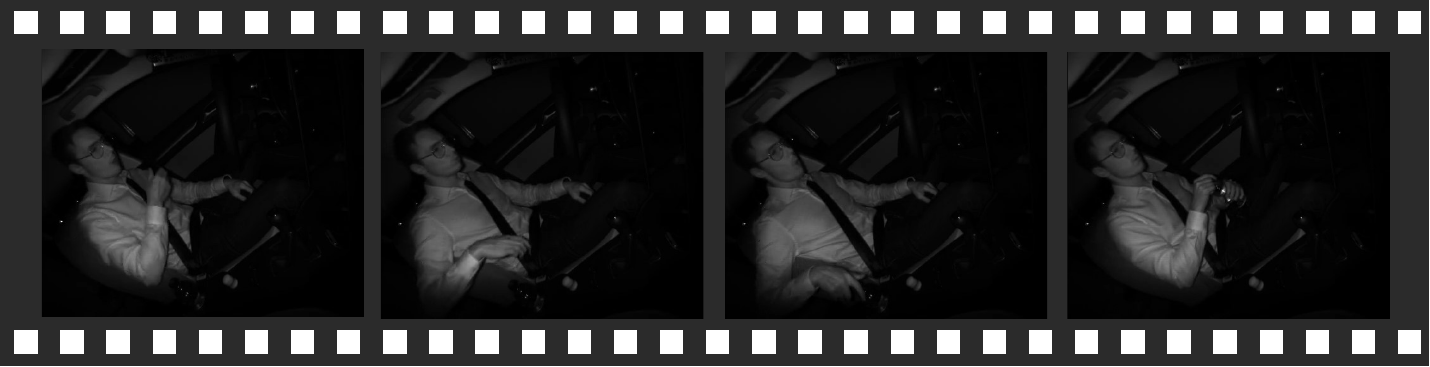} &
\textbf{Example 4:} Classify the following video into one of the following categories: \{34 categories\} \newline
Fetching an object \\
\midrule
\includegraphics[width=0.4\textwidth,height=1.7cm]{qualitative_results/drive/drive_test.pdf} &
\textbf{User:} Classify the following video into one of the following categories: \{34 categories\} \newline
\textbf{LLaVA-Video:} Eating \textcolor{Red}{(Correct, Confidence 0.354)}\\
\vspace{1mm}
\end{tabular}

\leftline{\textbf{$\blacktriangleright$ Iteration 3}}
\vspace{1em}
\begin{tabular}{m{0.4\textwidth}m{9cm}}
\includegraphics[width=0.4\textwidth,height=1.7cm]{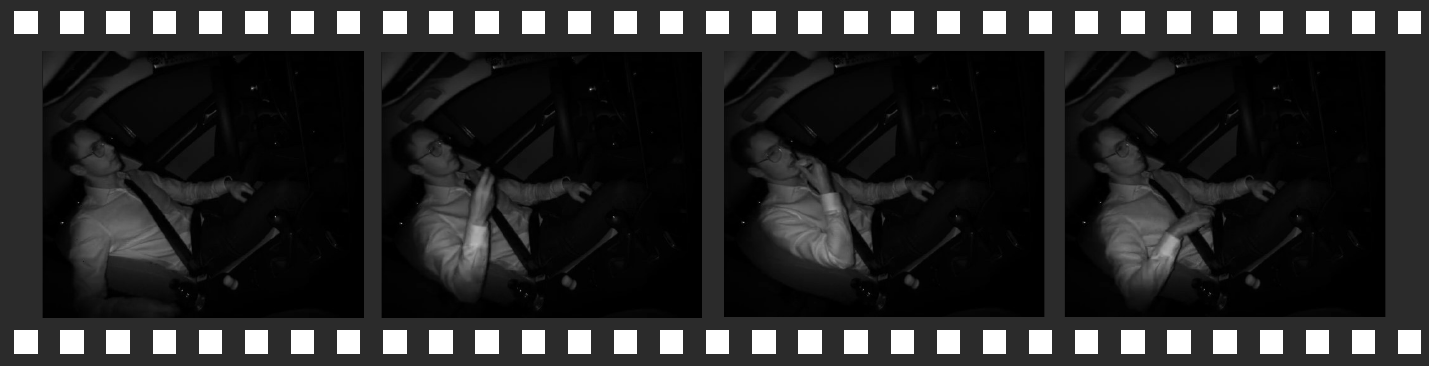} &
\textbf{Example 5:} Classify the following video into one of the following categories: \{34 categories\} \newline
Eating \\
\midrule
\includegraphics[width=0.4\textwidth,height=1.7cm]{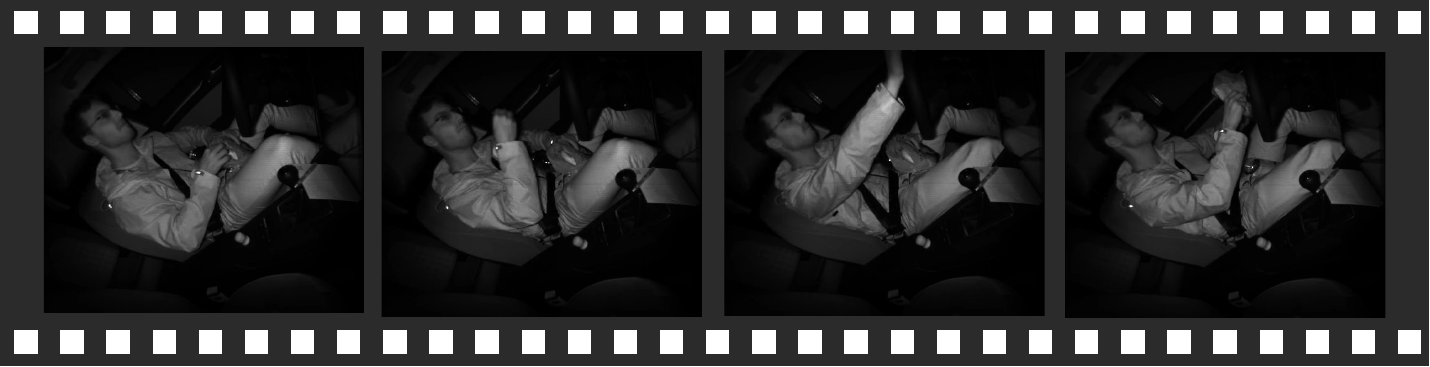} &
\textbf{Example 6:} Classify the following video into one of the following categories: \{34 categories\} \newline
Eating \\
\midrule
\includegraphics[width=0.4\textwidth,height=1.7cm]{qualitative_results/drive/drive_test.pdf} &
\textbf{User:} Classify the following video into one of the following categories: \{34 categories\} \newline
\textbf{LLaVA-Video:} Eating \textcolor{ForestGreen}{(Correct, Confidence 0.818)}\\
\end{tabular}

\vspace{1mm}
\end{tcolorbox}
}
\caption{Qualitative result on the Drive\&Act dataset.}
\end{figure*}
\end{center}
\vspace*{\fill} 

\vspace*{\fill} 
\begin{center}
\begin{figure*}[h]
\resizebox{\linewidth}{!}{
\begin{tcolorbox}[
    enhanced,                 
    colframe=black,           
    colback=white,            
    boxrule=1.5pt,            
    width=\textwidth,         
    before skip=1em,          
    after skip=1em,           
    overlay={%
        \node[anchor=north west, fill=black, font=\large, text=white, inner sep=2mm, 
        xshift=4mm, yshift=4mm, rounded corners=1mm] 
        at (frame.north west) {Video Captioning: CapERA};
    }
]
\vspace{1em} 

\leftline{\textbf{$\blacktriangleright$ Iteration 1}}
\vspace{2mm}
\begin{tabular}{m{0.4\textwidth}m{9cm}}
\includegraphics[width=0.4\textwidth,height=1.3cm]{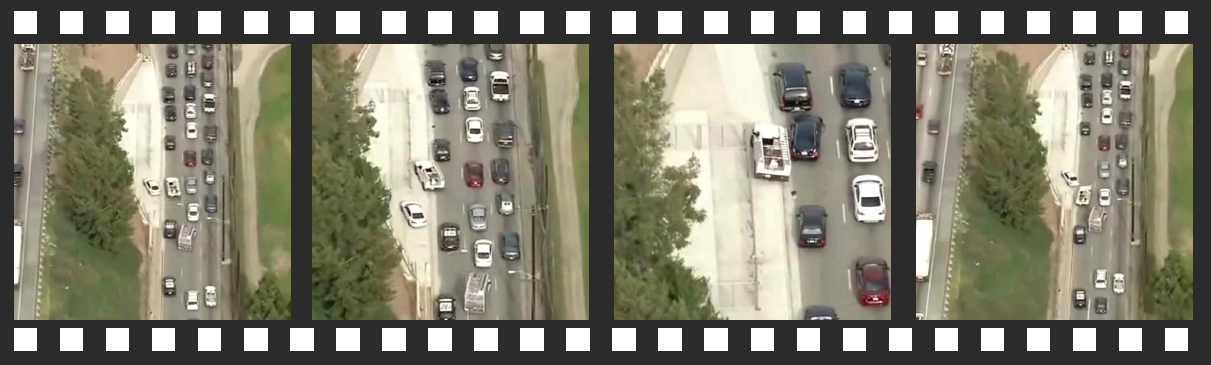} &
\textbf{Example 1:} Provide a concise depiction of this video. \newline
Two police cars were chasing a white car down a busy street while the white car was walking on the sidewalk. \\
\midrule
\includegraphics[width=0.4\textwidth,height=1.3cm]{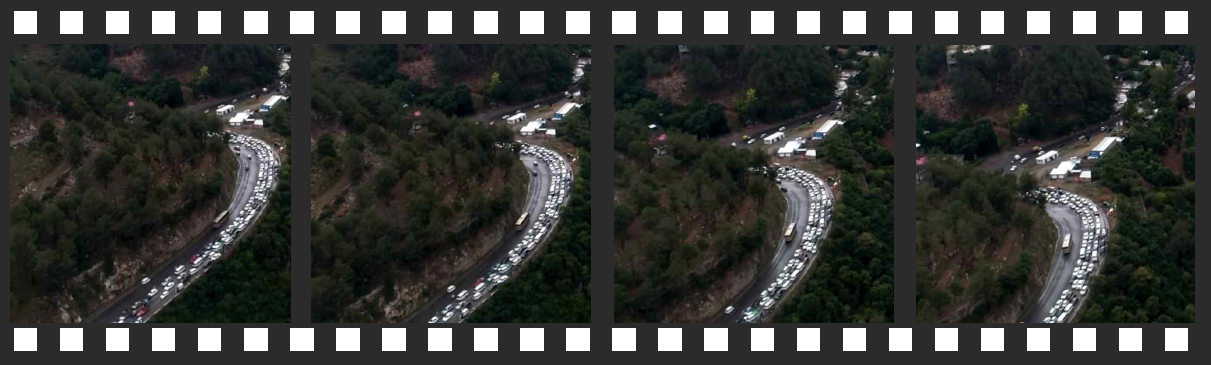} &
\textbf{Example 2:} Provide a concise depiction of this video. \newline
The winding mountain road is crowded with cars and surrounded by trees. \\
\midrule
\includegraphics[width=0.4\textwidth,height=1.3cm]{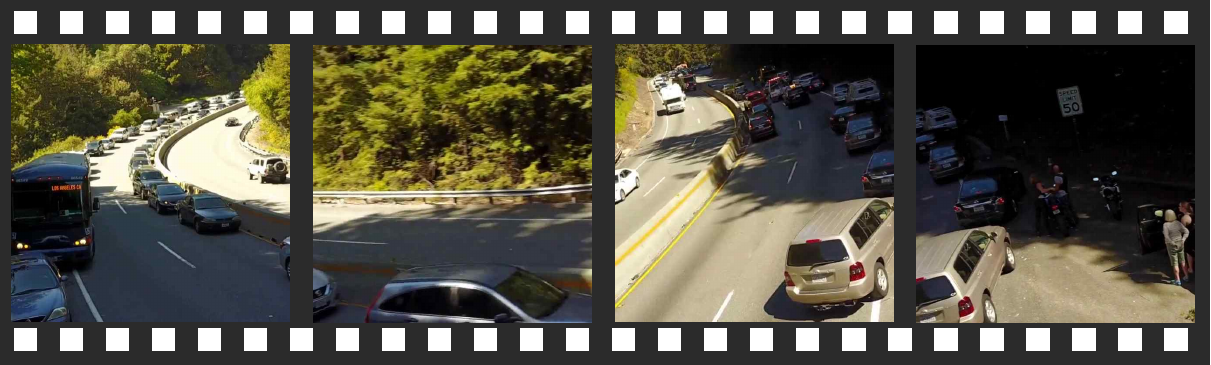} &
\textbf{User:} Provide a concise depiction of this video. \newline
\textbf{LLaVA-Video:} The traffic is moving slowly on the road. \newline \textcolor{Red}{(Confidence 0.114)}\\
\end{tabular}

\leftline{\textbf{$\blacktriangleright$ Iteration 2}}
\vspace{2mm}
\begin{tabular}{m{0.4\textwidth}m{9cm}}
\includegraphics[width=0.4\textwidth,height=1.3cm]{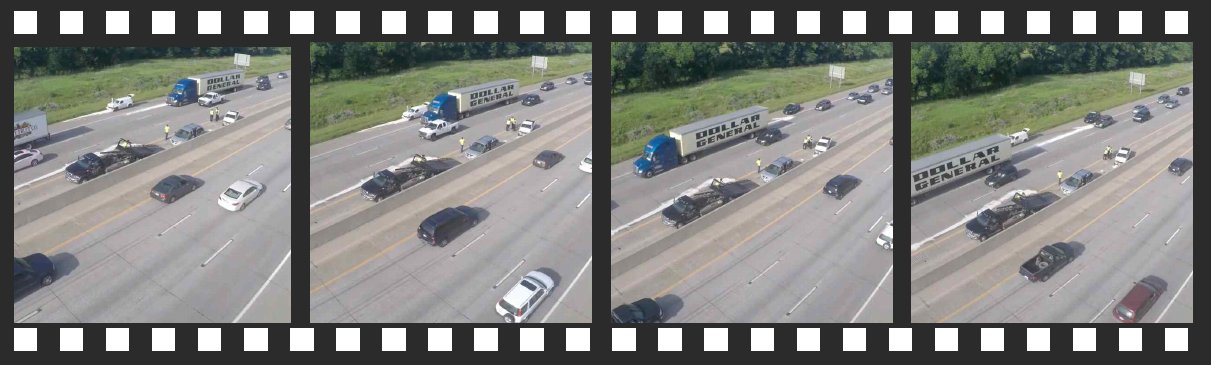} &
\textbf{Example 3:} Provide a concise depiction of this video. \newline
Three cars collided on a treelined road and policemen explored the accident. \\
\midrule
\includegraphics[width=0.4\textwidth,height=1.3cm]{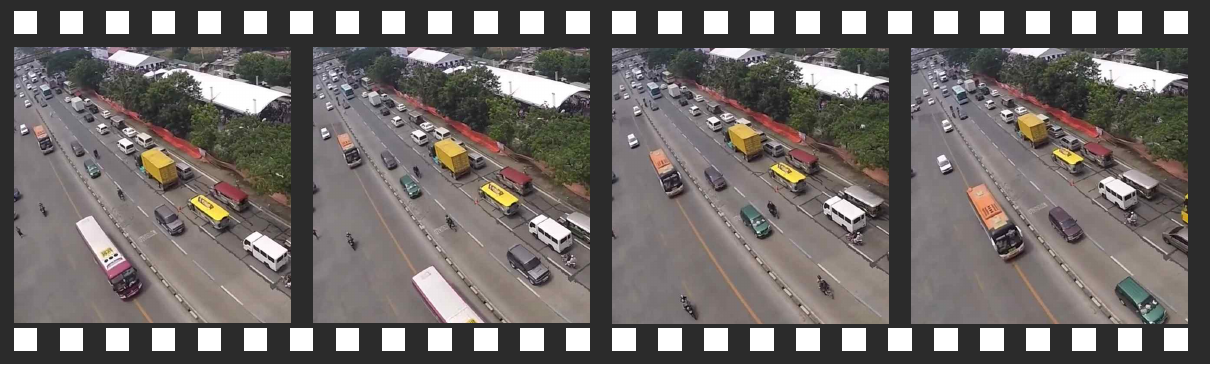} &
\textbf{Example 4:} Provide a concise depiction of this video. \newline
The road is crowded with cars and surrounded by small buildings and trees. \\
\midrule
\includegraphics[width=0.4\textwidth,height=1.3cm]{qualitative_results/capera/capera_test.pdf} &
\textbf{User:} Provide a concise depiction of this video. \newline
\textbf{LLaVA-Video:} Cars are driving on a road surrounded by trees and greenery. \textcolor{Red}{(Confidence 0.090)}\\
\end{tabular}

\leftline{\textbf{$\blacktriangleright$ Iteration 3}}
\vspace{2mm}
\begin{tabular}{m{0.4\textwidth}m{9cm}}
\includegraphics[width=0.4\textwidth,height=1.3cm]{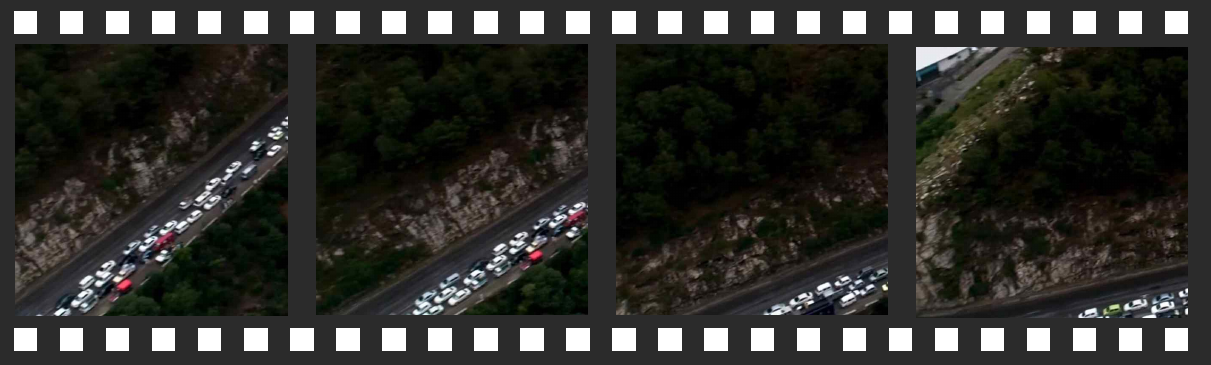} &
\textbf{Example 5:} Provide a concise depiction of this video. \newline
The winding mountain road is crowded with cars and surrounded by trees. \\
\midrule
\includegraphics[width=0.4\textwidth,height=1.3cm]{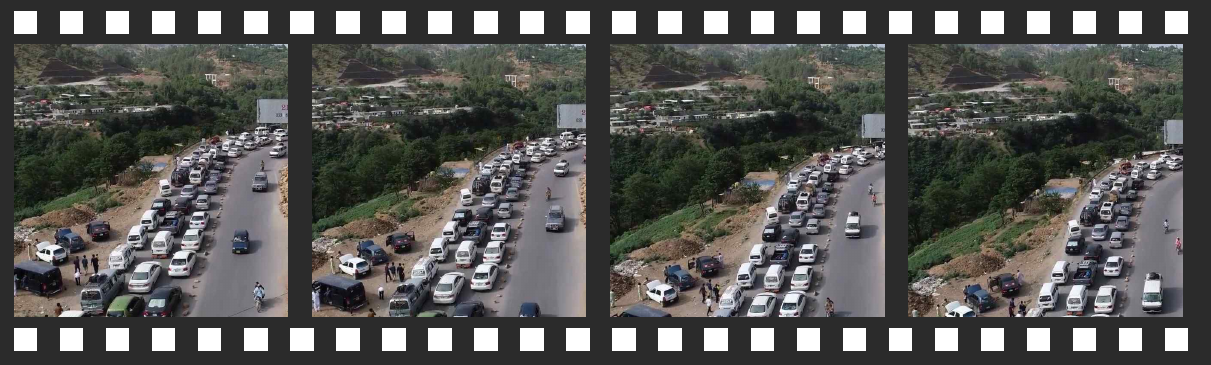} &
\textbf{Example 6:} Provide a concise depiction of this video. \newline
The winding mountain road is crowded with cars and surrounded by trees. \\
\midrule
\includegraphics[width=0.4\textwidth,height=1.3cm]{qualitative_results/capera/capera_test.pdf} &
\textbf{User:} Provide a concise depiction of this video. \newline
\textbf{LLaVA-Video:} The winding mountain road is crowded with cars and surrounded by trees. \textcolor{ForestGreen}{(Confidence 0.271)}\\
\end{tabular}

\leftline{\textbf{$\blacktriangleright$ Iteration 4}}
\vspace{2mm}
\begin{tabular}{m{0.4\textwidth}m{9cm}}
\includegraphics[width=0.4\textwidth,height=1.3cm]{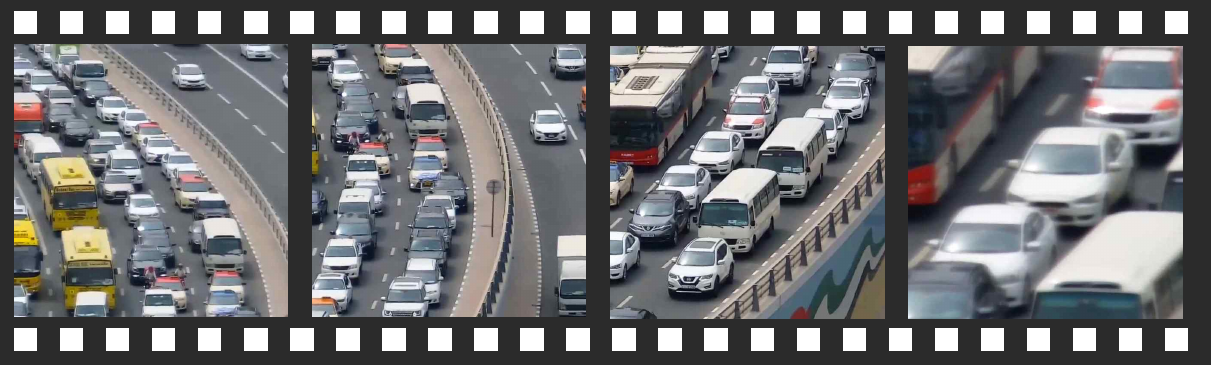} &
\textbf{Example 7:} Provide a concise depiction of this video. \newline
Crowded road with cars. \\
\midrule
\includegraphics[width=0.4\textwidth,height=1.3cm]{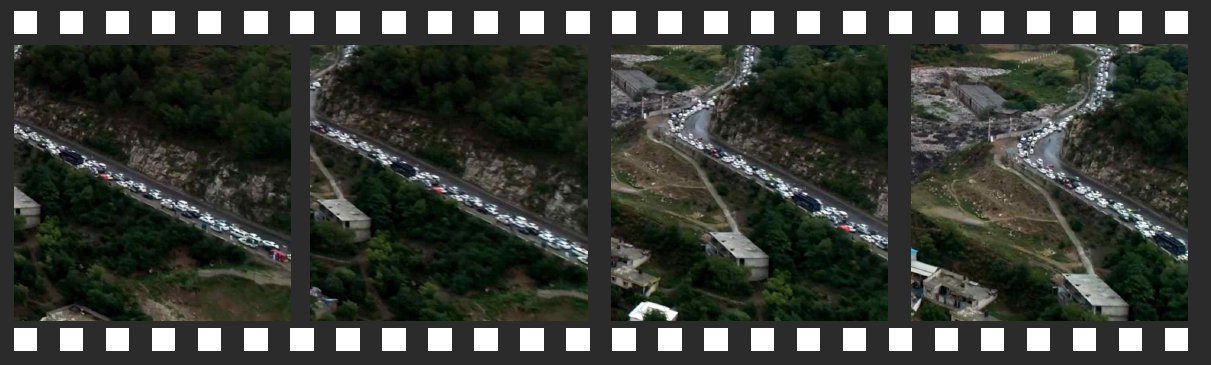} &
\textbf{Example 8:} Provide a concise depiction of this video. \newline
Winding mountain road crowded with cars and surrounded by trees. \\
\midrule
\includegraphics[width=0.4\textwidth,height=1.3cm]{qualitative_results/capera/capera_test.pdf} &
\textbf{User:} Provide a concise depiction of this video. \newline
\textbf{LLaVA-Video:} Traffic on a highway surrounded by trees. \newline \textcolor{Red}{(Confidence 0.179)}\\
\end{tabular}

\vspace{1mm}
\end{tcolorbox}
}
\caption{Qualitative result on the CapERA dataset.}
\end{figure*}
\end{center}
\vspace*{\fill} 
\clearpage

\end{document}